\def\eqref#1{equation~\ref{#1}}
\def\1{\bm{1}}
\DeclareMathAlphabet{\mathsfit}{\encodingdefault}{\sfdefault}{m}{sl}
\SetMathAlphabet{\mathsfit}{bold}{\encodingdefault}{\sfdefault}{bx}{n}
\newtheorem{problem}{Problem}
\newtheorem{theorem}{Theorem}
\newtheorem*{theorem*}{Theorem}
\newtheorem{lemma}{Lemma}
\newtheorem{corollary}[lemma]{Corollary}
\title{Unsupervised Model Selection for Time-series Anomaly Detection}
\date{} 					
\author{ 
Mononito~Goswami $^*$, Cristian~Challu
\thanks{Work carried out when the first two authors were interns at Amazon AWS AI Labs.} \\
	School of Computer Science\\
	Carnegie Mellon University\\
	\texttt{\{mgoswami,cchallu\}@cs.cmu.edu} \\
	\And
	Laurent Callot, Lenon Minorics \& Andrey Kan \\
	AWS AI Labs \\
	\texttt{\{lcallot,avkan\}@amazon.com}, \\ \texttt{minorics@amazon.de} \\
}
\begin{document}
\maketitle

\begin{abstract}

Anomaly detection in time-series has a wide range of practical applications. While numerous anomaly detection methods have been proposed in the literature, a recent survey concluded that no single method is the most accurate across various datasets. To make matters worse, anomaly labels are scarce and rarely available in practice. The practical problem of selecting the most accurate model for a given dataset without labels has received little attention in the literature. 
This paper answers this question \textit{i.e.} Given an unlabeled dataset and a set of candidate anomaly detectors, how can we select the most accurate model? To this end, we identify three classes of surrogate (unsupervised) metrics, namely, \textit{prediction error}, \textit{model centrality}, and \textit{performance on injected synthetic anomalies}, and show that some metrics are highly correlated with standard supervised anomaly detection performance metrics such as the $F_1$ score, but to varying degrees. We formulate metric combination with multiple imperfect surrogate metrics as a robust rank aggregation problem. We then provide theoretical justification behind the proposed approach. Large-scale experiments on multiple real-world datasets demonstrate that our proposed unsupervised approach is as effective as selecting the most accurate model based on partially labeled data.

\end{abstract}

\keywords{Model Selection \and Time-series Anomaly Detection \and Unsupervised Learning}

\section{Introduction}
\label{s:intro}

Anomaly detection in time-series data has gained considerable attention from the academic and industrial research communities due to the explosion in the amount of data produced and the number of automated system requiring some form of monitoring. 
A large number of anomaly detection methods have been developed to solve this task \citep{schmidl2022anomaly, blazquez2021review}, ranging from simple algorithms \citep{keogh2005hot, ramaswamy2000efficient} to complex deep-learning models \citep{xu2018unsupervised,challu2022deep}. These models have significant variance in performance across datasets \citep{schmidl2022anomaly, paparrizos2022tsb}, and evaluating their actual performance on real-world anomaly detection tasks is non-trivial, even when labeled datasets are available~\citep{wu2021current}.

Labels are seldom available for many, if not most, anomaly detection tasks. Labels are indications of which time points in a time-series are anomalous. The definition of an anomaly varies with the use case, but these definitions have in common that anomalies are rare events. Hence, accumulating a sizable number of labeled anomalies typically requires reviewing a large portion of a dataset by a domain expert. This is an expensive, time-consuming, subjective, and thereby error-prone task which is a considerable hurdle for labeling even a subset of data.
Unsurprisingly, a large number of time-series anomaly detection methods are unsupervised or semi-supervised -- \textit{i.e.} they do not require any anomaly labels during training and inference.

There is no single universally best method~\citep{schmidl2022anomaly, paparrizos2022tsb}.
Therefore it is important to select the most accurate method for a given dataset without access to anomaly labels. The problem of unsupervised anomaly detection model selection has been overlooked in the literature, even though it is a key problem in practical applications. Thus,  we offer an answer to the question: \textbf{Given an unlabeled dataset and a set of candidate models, how can we select the most accurate model?}

Our approach is based on computing ``surrogate'' metrics that correlate with model performance yet do not require anomaly labels, followed by aggregating model ranks induced by these metrics (Fig.~\ref{fig:workflow}). Empirical evaluation on 10 real-world datasets spanning diverse domains such as medicine, sports, etc. shows that \textbf{our approach can perform unsupervised model selection as efficiently as selection based on labeling a subset of data}.







In summary, our contributions are as follows: 
\begin{itemize}
    \item To the best of our knowledge, we propose one of the first methods for unsupervised selection of anomaly detection models on time-series data. To this end, we identify intuitive and effective unsupervised metrics for model performance. Prior work has used a few of these unsupervised metrics for problems other than time-series anomaly detection model selection.
    
    \item We propose a novel robust rank aggregation method for combining multiple surrogate  metrics into a single model selection criterion. We show that our approach performs on par with selection based on labeling a subset of data.
    
    \item We conduct large-scale experiments on over 275 diverse time-series, spanning a gamut of domains such as medicine, entomology, etc. with 5 popular and widely-used anomaly detection models, each with 1 to 4 hyper-parameter combinations, resulting in over 5,000 trained models. Upon acceptance, we will make our code publicly available.
    
    
\end{itemize}

In the next section, we 
formalize the model selection problem. We then describe the surrogate  metric classes in Sec. \ref{s:metrics}, and present our rank aggregation method in Sec. \ref{s:aggr}. Our empirical evaluation is described in Sec. \ref{s:eval}. Finally, we summarize related work in Sec. \ref{s:related} and conclude in Sec. \ref{s:conclude}.

\begin{figure}[!tb]
    \centering
    \includegraphics[trim={1cm 3cm 1cm 3cm}, clip, width = 0.8\columnwidth]{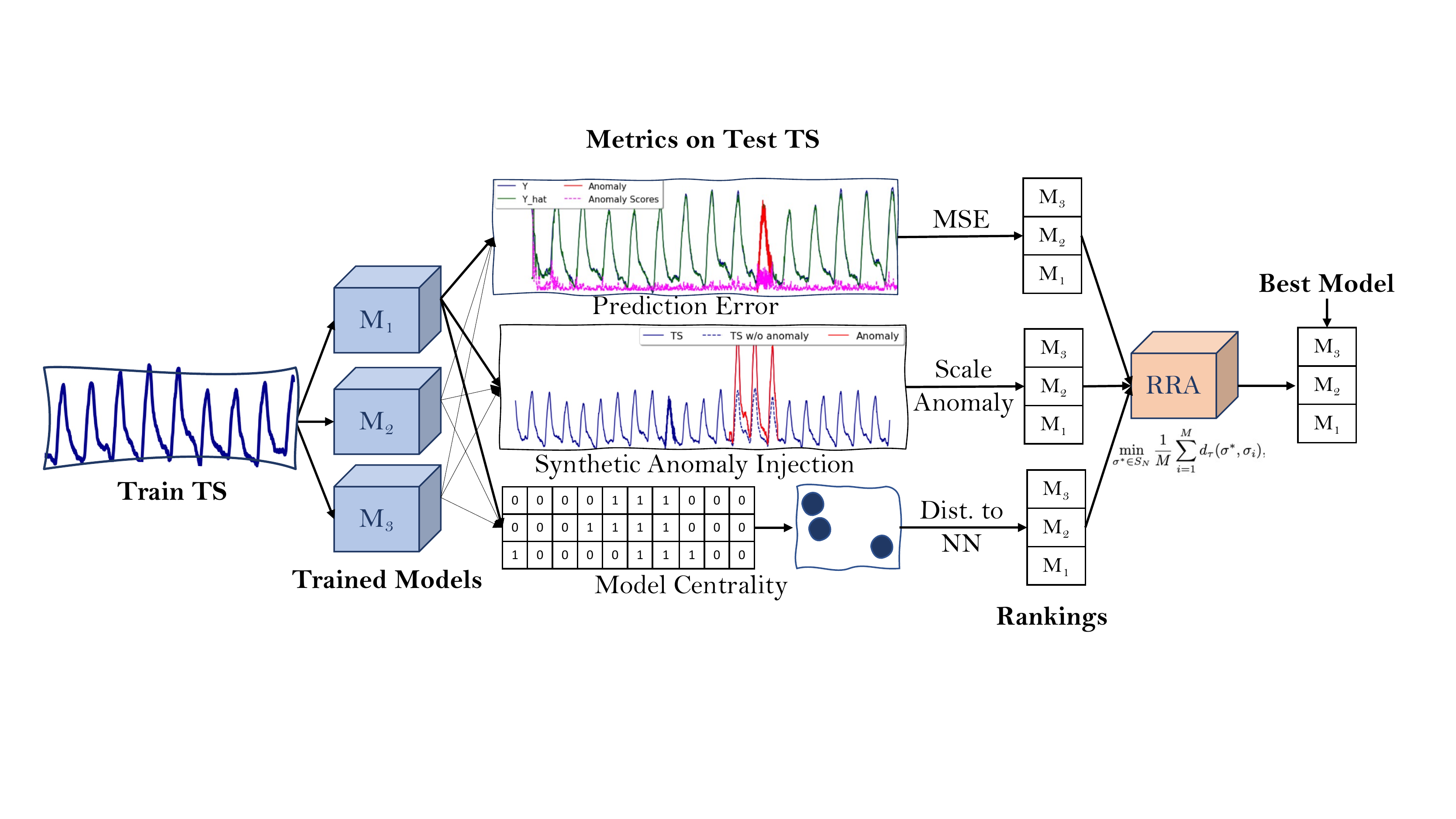}
    \caption{\textit{The Model Selection Workflow}. We identify three classes of surrogate metrics of model quality (Sec.~\ref{s:metrics}), and propose a novel robust rank aggregation framework to combine multiple rankings from metrics (Sec.~\ref{s:aggr}).}
    \label{fig:workflow}
\end{figure}
\section{Preliminaries \& The Model Selection Problem}
\label{s:prel}

Let $\{\bm{x}_t, y_t\}_{t=1}^T$ denote a multivariate time-series (TS) with observations $(\bm{x}_1,\ldots,\bm{x}_T)$, $\bm{x}_t\in\mathbb{R}^d$ and anomaly labels $(y_1,\ldots,y_T)$, $y_t \in \{0, 1\}$, where $y_t=1$ indicates that the observation $\bm{x}_t$ is an anomaly. The labels are only used for evaluating our selection approaches, but not for the model selection procedure itself.


Next, let $\mathcal{M} = \{A_i\}_{i=1}^N$ denote a set of $N$ candidate anomaly detection models. Each model $A_i$ is a tuple (\texttt{detector}, \texttt{hyper-parameters}), i.e., a combination  of an anomaly detection method (e.g. LSTM-VAE~\citep{park2018multimodal}) and a fully specified hyper-parameter configuration (e.g. for LSTM-VAE, \texttt{hidden\_layer\_size=128, num\_layers=2, $\cdots$}).

Here, we only consider models that do not require anomaly labels for training. Some models still require unlabeled time-series as training data. We therefore, consider a train/test split $\{\bm{x}_t\}_{t=1}^{t_{\text{test}} - 1}$, $\{\bm{x}_t\}_{t=t_{\text{test}}}^{T}$, where the former is used if a model requires training (without labels). In our notation, $A_i$ denotes a trained model.



We assume that a trained model $A_i$, when applied to observations $\{\bm{x}_t\}_{t=t_{\text{test}}}^{T}$, produces anomaly scores $\{s_{t}^{i}\}_{t=t_{\text{test}}}^{T}$, $s_t^i \in \mathbb{R}_{\geq 0}$. We assume that a higher anomaly score indicates that the observation is more likely to be an anomaly. However, we do not assume that the scores correspond to likelihoods of any particular statistical model or that the scores are comparable across models.

Model performance or, equivalently, quality of the scores can be measured using a supervised metric $\mathcal{Q}(\{s_{t}\}_{t=1}^{T}, \{y_{t}\}_{t=1}^{T})$, such as the Area under Precision Recall curve or best $F_1$ score, commonly used in literature. We discuss the choice of the quality metric in the next section.


In general, rather than considering a single time-series, we will be performing model selection for a set of $L$ time-series with observations $\mathcal{X}=\{\{\bm{x}_t^j\}_{t=1}^{T}\}_{j=1}^L$ and labels $\mathcal{Y}=\{\{y_t\}_{t=1}^T\}_{j=1}^L$, where $j$ indexes time-series. Let $\mathcal{X}_{\text{train}}$, $\mathcal{X}_{\text{test}}$, and $\mathcal{Y}_{\text{test}}$ denote the train and test portions of observations, and the test portion of labels, respectively. We are now ready to introduce the following problem.

\begin{problem}
\textbf{Unsupervised Time-series Anomaly Detection Model Selection.}
Given observations $\mathcal{X}_{\text{test}}$ and a set of models $\mathcal{M} = \{A_i\}_{i=1}^N$ trained using $\mathcal{X}_{\text{train}}$, select a model that maximizes the anomaly detection quality metric $\mathcal{Q}(A_i(\mathcal{X}_{\text{test}}), \mathcal{Y}_{\text{test}})$. The selection procedure cannot use labels.
\end{problem}

\subsection{Measuring Anomaly Detection Model Performance}
Anomaly Detection can be viewed as a binary classification problem where each time point is classified as an anomaly or a normal observation. Hence, the performance of a model $A_i$ can be measured using standard precision and recall metrics. However, these metrics ignore the sequential nature of time-series; thus, time-series anomaly detection is usually evaluated using adjusted versions of precision and recall \citep{paparrizos2022volume}. We adopt widely used adjusted versions of precision and recall \citep{xu2018unsupervised, challu2022deep, shen2020timeseries, su2019robust, carmona2021neural}. These metrics treat time points as independent samples, except when an anomaly lasts for several consecutive time points. In this case, detecting any of these points is treated as if all points inside the anomalous segment were detected.

Adjusted precision and recall can be summarized in a metric called adjusted $F_1$ score, which is the harmonic mean of the two. The adjusted $F_1$ score depends on the choice of a decision threshold on anomaly scores. In line with several recent studies, we consider threshold selection as a problem orthogonal to our problem \citep{schmidl2022anomaly, laptev2015generic, blazquez2021review, rebjock2021online, paparrizos2022volume, paparrizos2022tsb} and therefore, consider metrics that summarize model performance across all possible thresholds. Common metrics include area under the precision-recall curve and best $F_1$ (maximum $F_1$ over all possible thresholds). Like \citet{xu2018unsupervised}, we found a strong correlation between the two (App.~\ref{a:f1_prauc_corr}). We also found best $F_1$ to have statistically significant positive correlation with volume under the surface of ROC curve, a recently proposed robust evaluation metric (App.~\ref{a:vus_f1_corr}). Thus, in the remainder of the paper, we restrict our analysis to identifying models with the highest \emph{adjusted best} $F_1$ score (\textit{i.e.} $\mathcal{Q}$ is \emph{adjusted best} $F_1$).

\section{Surrogate Metrics of Model Performance}
\label{s:metrics}

The problem of unsupervised model selection is often viewed as finding an unsupervised metric which \textit{correlates} well with supervised metrics of interest \citep{ma2021large, goix2016evaluate, lin2020infogan, duan2019unsupervised}. Each unsupervised metric serves as a noisy measure of ``model goodness" and reduces the problem of picking the best performing model according to the metric. We identified three classes of \textit{imperfect} metrics that closely align with expert intuition in predicting the performance of time-series anomaly detection models. Our metrics are unsupervised because \textit{they do not require anomaly labels}. However, some metrics such as $F_1$ score on synthetic anomalies are typically used for supervised evaluation. To avoid confusion we use term \textit{surrogate} for our metrics. Below we elaborate on each class of surrogate metrics, starting with an intuition behind each class.

\paragraph{Prediction Error} \textit{If a model can forecast or reconstruct time-series well it must also be a good anomaly detector.} A large number of anomaly detection methods are based on forecasting or reconstruction \citep{schmidl2022anomaly}, and for such models, we can compute forecasting or reconstruction error without anomaly labels. We collectively call these \textit{prediction error} metrics and consider a common set of statistics, such as mean absolute error, mean squared error, mean absolute percentage error, symmetric mean absolute percentage error, and the likelihood of observations. For multivariate time-series, we average each metric across all the variables. For example, given a series of observations $\{x_t\}_{t=1}^T$ and their predictions $\{\hat{x}_t\}_{t=1}^T$, mean squared error is defined as $\text{MSE}=\frac{1}{T}\sum_{t=1}^T(x_t-\hat{x}_t)^2$. In the interest of space, we refer the reader to a textbook (e.g., \cite{hyndman2018forecasting}) for definitions of other metrics. Prior work on time-series anomaly detection \citep{saganowski2020time, laptev2015generic, kuang2022modeling} used forecasting error to select between some classes of models. Prediction metrics are not perfect for model selection because the time-series can contain anomalies. Low prediction errors are desirable for all points except anomalies, but we do not know where anomalies are without labels. While prediction metrics can only be computed for anomaly detection methods based on time-series forecasting or reconstruction, the next two classes of metrics apply to any anomaly detection method.




\begin{wrapfigure}[18]{r}{0.45\columnwidth}
\centering
\includegraphics[width=0.4\columnwidth, trim={2cm 2cm 2cm 2cm}]{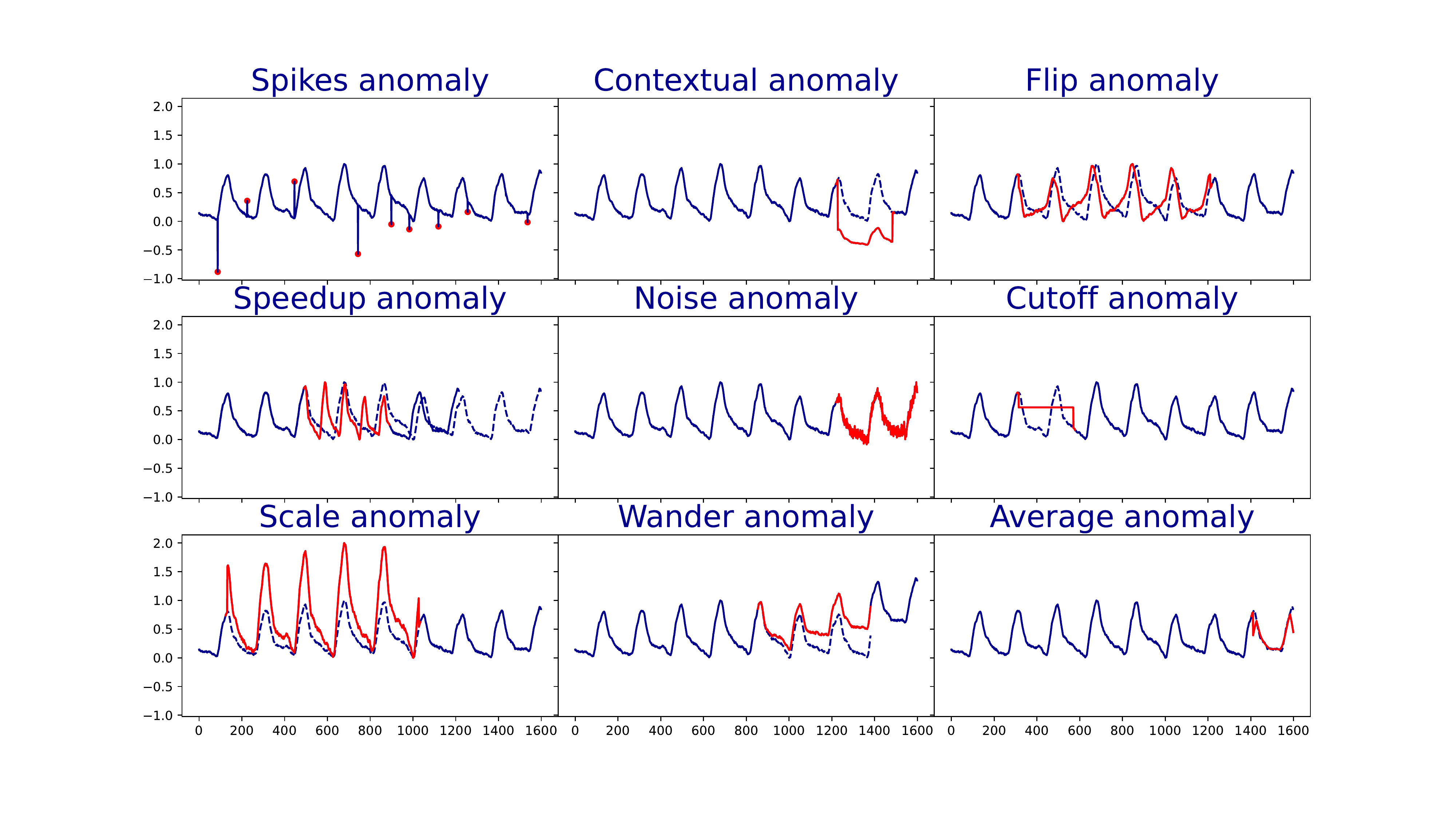}
\caption{\textit{Different kinds of synthetic anomalies}. We inject 9 different types of synthetic anomalies randomly, one at a time, across multiple copies of the original time-series. The \textcolor{blue}{- -} is original time-series before anomalies are injected, \textcolor{red}{--} is the injected anomaly and the solid \textcolor{blue}{--} is the time-series after anomaly injection.}
\label{fig:anomalies}
\end{wrapfigure}

\paragraph{Synthetic Anomaly Injection} \textit{A good anomaly detection model will perform well on data with synthetically injected anomalies.} Some studies have previously explored using synthetic anomaly injection to train anomaly detection models \citep{carmona2021neural}. Here, we extend this line of research by systematically evaluating model selection capability after injecting different kinds of anomalies. Given an input time-series without anomaly labels, we randomly inject an anomaly of a particular type. The location of the injected anomaly is then treated as a (pseudo-)positive label, while the rest of the time points are treated as a (pseudo-)negative label. We then select a model that achieves the \textit{best adjusted} $F_1$ score with respect to pseudo-labels. Instead of relying on complex deep generative models~\citep{wen2020time} we devise simple and efficient yet effective procedures to inject anomalies of previously described types  \citep{schmidl2022anomaly, wu2021current} as shown in Fig.~\ref{fig:anomalies}. We defer the details of anomaly injection approaches to Appendix~\ref{a:syn_anomalies}. Model selection based on anomaly injection might not be perfect because (i) real anomalies might be of a different type compared to injected ones, and (ii) there may already be anomalies for which our pseudo-labels will be negative.


\paragraph{Model Centrality} \textit{There is only one ground truth, thus models close to ground truth are close to each other. Hence, the most ``central" model is the best one.} Centrality-based metrics have seen recent success in unsupervised model selection for disentangled representation learning \citep{lin2020infogan, duan2019unsupervised} and anomaly detection \citep{ma2021large}. To adapt this approach to time-series, we leverage the scores from the anomaly detection models to define a notion of model proximity. Anomaly scores of different models are not directly comparable. 
However, since anomaly detection is performed by thresholding the scores, we can consider ranking of time points by their anomaly scores.
Let $\sigma_k(i)$ denote the rank of time point $i$ according to the scores from model $k$. We  define the distance between models $A_k$ and $A_l$ as the Kendall's $\tau$ distance, which is 
the number of pairwise disagreements: \begin{equation} d_\tau(\sigma_k, \sigma_l) = \sum_{i < j} \mathbb{I}\{(\sigma_k(i) - \sigma_k(j))(\sigma_l(i) - \sigma_l(j))<0\} \label{eq:kendall} \end{equation} 
Next, we measure model centrality as the average distance to its $K$ nearest neighbors, where $K$ is a parameter. This metric favors models which are close to their nearest neighbors. The centrality-based approach is imperfect, because ``bad" models might produce similar results and form a cluster.

Each metric, while imperfect, is predictive of anomaly detection performance but to a varying extent as we show in Sec.~\ref{s:eval}. Access to multiple noisy metrics raises a natural question: \textit{How can we combine multiple imperfect metrics to improve model selection performance?}
\section{Robust Rank Aggregation}
\label{s:aggr}
Many studies have explored the benefits of combining multiple sources of noisy information to reduce error in different areas of machine learning such as crowd-sourcing \citep{dawid1979maximum} and programmatic weak supervision \citep{ratner2016data, goswami2021weak, dey2022weakly, gao2022classifying, zhang2022survey}. Each of our surrogate metrics induces a (noisy) model ranking. Thus two natural questions arise in the context of our work: (1) \textit{How do we reliably combine multiple model rankings?} (2) \textit{Does aggregating multiple rankings help in model selection?}

\subsection{Approaching model selection as a Rank Aggregation Problem}
Let $[N] = \{1, 2, \cdots, N\}$ denote the universe of elements and $S_N$ be the set of permutations on $[N]$. For $\sigma \in S_N$, let $\sigma(i)$ denote the rank of element $i$ and $\sigma^{-1}(j)$ denote the element at rank $j$. Then the rank aggregation problem is as follows.

\begin{problem}
\textbf{Rank Aggregation.}
Given a collection of $M$ permutations $\sigma_1, \cdots, \sigma_M \in S_N$ of $N$ items, find $\sigma^* \in S_N$ that best \textit{summarizes} these permutations according to some objective $C(\sigma^*, \sigma_1, \cdots, \sigma_M)$.
\end{problem}

In our context, the $N$ items correspond to the $N$ candidate anomaly detection models, $M$ corresponds to the number of surrogate metrics, $\sigma^*$ is the best summary ranking of models, and $\sigma^{*-1}(1)$ is the predicted best model.\footnote{We use terms ``ranking'' and ``permutation'' as synonyms. The subtle difference is that in a ranking multiple models can receive the same rank. Where necessary, we break ties uniformly at random.}




The problem of rank aggregation has been extensively studied in social choice, bio-informatics and machine learning literature.
Here, we consider \textit{Kemeny rank aggregation} which involves choosing a distance on the set of rankings and finding a barycentric or median ranking \citep{korba2017learning}. Specifically, the rank aggregation problem is known as the Kemeny-Young problem if the aggregation objective is defined as $C = \frac{1}{M}\sum_{i = 1}^M d_\tau(\sigma^*, \sigma_i) $, where $d_\tau$ is the Kendall's $\tau$ distance (Eq.~\ref{eq:kendall}).


Kemeny aggregation has been shown to satisfy many desirable properties, but it is NP-hard even with as few as 4 rankings \citep{dwork2001rank}. To this end, we consider an efficient approximate solution using the \textit{Borda method} \citep{borda1784memoire} in which each item (model) receives points from each ranking (surrogate metric). For example, if a model is ranked $r$-th by a surrogate metric, it receives $(N-r)$ points from that metric. The models are then ranked in descending order of total points received from all metrics.

Suppose that we have several surrogate metrics for ranking anomaly detection models. Why would it be beneficial to aggregate the rankings induced by these metrics? The following theorem provides an insight into the benefit of Borda rank aggregation.

Consider two arbitrary models $i$ and $j$ with (unknown) ground truth rankings $\sigma^{\ast}(i)$ and $\sigma^{\ast}(j)$. Without the loss of generality, assume $\sigma^{\ast}(i) < \sigma^{\ast}(j)$, i.e., model $i$ is better. Next, let's view each surrogate metric $k=1,\ldots,M$ as a random variable $\Omega_k$ taking values in permutations $S_N$, such that $\Omega_k(i) \in [N]$ denotes the rank of item $i$. We apply Borda rank aggregation on realizations of these random variables. Theorem~\ref{thm:borda} states that, under certain assumptions, the probability of Borda ranking making a mistake (i.e., placing model $i$ lower than $j$) is upper bounded in a way such that increasing $M$ decreases this bound.

\begin{theorem}
Assume that $\Omega_k$'s are pairwise independent, and that $\mathbb{P}(\Omega_k(i) < \Omega_k(j))>0.5$ for any arbitrary models $i$ and $j$, i.e., each surrogate metric is better than random. Then the probability that Borda aggregation makes a mistake in ranking items $i$ and $j$ is at most  $2\exp \left(-\frac{M \epsilon^2}{2} \right)$, for some fixed $\epsilon$.
\label{thm:borda}
\end{theorem}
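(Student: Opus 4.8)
The plan is to reduce the pairwise comparison under Borda aggregation to a statement about a sum of independent bounded random variables, and then apply a Hoeffding-type concentration bound. The key observation is that Borda places model $i$ above model $j$ in the aggregate ranking if and only if the total number of points $i$ receives across the $M$ surrogate metrics exceeds the total $j$ receives. Since each metric $k$ awards $N - \Omega_k(\cdot)$ points, the event ``$i$ beats $j$ in Borda'' is exactly $\sum_{k=1}^M \bigl(\Omega_k(j) - \Omega_k(i)\bigr) > 0$. So I would define $Z_k = \mathbb{I}\{\Omega_k(i) < \Omega_k(j)\} - \mathbb{I}\{\Omega_k(i) > \Omega_k(j)\}$, or more cleanly (to avoid ties among $i,j$ being an issue) work directly with $D_k := \Omega_k(j) - \Omega_k(i) \in \{-(N-1),\dots,N-1\}$, a bounded random variable. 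First I would note $\mathbb{E}[D_k] = \mu_k > 0$: this follows because the hypothesis $\mathbb{P}(\Omega_k(i) < \Omega_k(j)) > 0.5$ gives $\mathbb{P}(D_k > 0) > \mathbb{P}(D_k < 0)$, and one can lower-bound $\mathbb{E}[D_k]$ by a positive quantity — here is where the ``for some fixed $\epsilon$'' enters: set $\epsilon = \frac{1}{N-1}\min_{k}\mu_k > 0$, or work with the sign variables $\mathrm{sgn}(D_k)$ whose mean is $\mathbb{P}(\Omega_k(i)<\Omega_k(j)) - \mathbb{P}(\Omega_k(i)>\Omega_k(j)) \ge 2\delta$ for some $\delta>0$, and take $\epsilon = 2\delta$.

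Second, the Borda mistake event is $\{\sum_k D_k \le 0\}$, equivalently $\{\sum_k (D_k - \mu_k) \le -\sum_k \mu_k\}$. By pairwise independence of the $\Omega_k$'s — and hence of the $D_k$'s — I would invoke Hoeffding's inequality. One subtlety: Hoeffding in its textbook form needs mutual independence, not merely pairwise. I would handle this by using the sign-variable formulation: let $W_k = \mathrm{sgn}(D_k) \in \{-1,0,1\}$ (or $\in\{-1,1\}$ if we assume no ties between the two models' ranks, which holds since permutations are injective, so actually $D_k \neq 0$ always and $W_k \in \{-1,1\}$). Then $W_k \in [-1,1]$, $\mathbb{E}[W_k] \ge \epsilon$, the mistake event is $\{\sum_k W_k \le 0\}$, and Hoeffding gives $\mathbb{P}(\sum_k (W_k - \mathbb{E}W_k) \le -M\epsilon) \le \exp(-\frac{2M^2\epsilon^2}{M \cdot 4}) = \exp(-\frac{M\epsilon^2}{2})$. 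The factor of $2$ in the theorem statement is slack (or accounts for a two-sided bound / a union over the boundary case), so the claimed bound $2\exp(-M\epsilon^2/2)$ follows comfortably. If only pairwise independence is genuinely available, I would instead use Chebyshev's inequality on $\sum_k W_k$ (variance adds under pairwise independence), which yields a $1/(M\epsilon^2)$-type bound — still decaying in $M$ — and remark that the exponential bound holds under the stronger mutual-independence reading of the assumption, which is presumably what is intended.

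The main obstacle is the independence/ties bookkeeping rather than any deep inequality: pinning down exactly what ``pairwise independent'' buys us (Chebyshev vs. Hoeffding), and making sure the definition of $\epsilon$ is uniform over all pairs $i,j$ so that the bound is stated cleanly for ``any arbitrary models.'' Everything else — rewriting the Borda outcome as a sign-sum, centering, and applying the concentration inequality — is routine. I would present the argument with the sign variables $W_k$ and mutual independence, since that matches the target bound exactly, and footnote the Chebyshev fallback.
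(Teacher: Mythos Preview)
Your proposal is essentially the paper's proof: define the sign variables $X_k = 2\,\mathbb{I}[\Omega_k(i)<\Omega_k(j)]-1$ (your $W_k$), use the hypothesis to get $\mathbb{E}[X_k]>\epsilon$, and apply Hoeffding to $S_M=\sum_k X_k$ to bound $\mathbb{P}(S_M\le 0)\le 2\exp(-M\epsilon^2/2)$. You are in fact more careful than the paper on one point---the paper simply invokes Hoeffding under the stated ``pairwise independent'' assumption without comment, whereas you flag that Hoeffding requires mutual independence and offer the Chebyshev fallback; otherwise the arguments coincide.
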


The proof of the theorem is given in Appendix~\ref{a:theory_borda}. We do not assume that $\Omega_k$ are identically distributed, but only that $\Omega_k$'s are pairwise independent.
We emphasize that the notion of independence of $\Omega_k$ as random variables is different from rank correlation between realizations of these variables (i.e., permutations). For example, two permutations drawn independently from a Mallow's distribution can have a high Kendall's $\tau$ coefficient (rank correlation). Therefore, our assumption of independence, commonly adopted in theoretical analysis~\citep{ratner2016data}, does not contradict our observation of high rank correlation between surrogate metrics.

The Borda method weighs all surrogate metrics equally. However, our experience and theoretical analysis suggest that focusing only on ``good'' metrics can improve performance after aggregation (see Sec.~\ref{s:eval} and Corollary~\ref{cor:borda}). But how do we identify ``good" ranks without access to anomaly labels?





\subsection{Empirical Influence and Robust Rank Aggregation}
\label{s:ei_and_rra}
Since we do not have access to anomaly labels, we introduce an intuitive unsupervised way to identify ``good" or ``reliable" rankings based on the notion of empirical influence. Empirical influence functions have been previously used to detect \textit{influential} cases in regression \citep{cook1980characterizations}. Given a collection of $M$ rankings $\mathcal{S} =$ $\{\sigma_1, \cdots, \sigma_M\}$, let $\text{Borda}(\mathcal{S})$ denote the aggregate ranking from the Borda method. Then for some $\sigma_i\in \mathcal{S}$, empirical influence $\text{EI}(\sigma_i, \mathcal{S})$ measures the impact of $\sigma_i$ on the Borda solution. 
\begin{equation}
\text{EI}(\sigma_i, \mathcal{S})= f(\mathcal{S}) - f(\mathcal{S} \setminus \sigma_i), \quad \text{where } f(\mathcal{A}) = \frac{1}{|\mathcal{A}|} \sum_{i=1}^{|\mathcal{A}|} d_{\tau}(\text{Borda}(\mathcal{A}), \sigma_i)    
\end{equation}



Under the assumption that a majority of rankings are good, EI is likely to identify bad rankings as ones with high positive EI. This is intuitive since removing a bad ranking results in larger decrease in the objective $f(\mathcal{S} \setminus \{\sigma_i\})$.

We cluster all rankings based on their EI using a two-component single-linkage agglomerative clustering~\citep{murtagh2012algorithms}, discard all rankings from the ``bad" cluster, and apply Borda aggregation to the remaining ones.
To further improve ranking performance of aggregated rank, especially at the top, we only consider models at the top-$k$ ranks, setting the positions of the rest of the models to $N$ \citep{fagin2003comparing}. See Appendix~\ref{a:topk_borda} for details. We collectively refer to these variations as Robust Borda rank aggregation. 
\section{Evaluation Results}
\label{s:eval}
\subsection{Datasets}
We carry out experiments on two popular and widely used real-world collections with diverse time-series and anomalies:
\vspace{-2mm}
\paragraph{UCR Anomaly Archive (\texttt{UCR}) \citep{wu2021current}} Wu and Keogh recently found flaws in many commonly used benchmarks for anomaly detection tasks. They introduced the UCR time-series Anomaly Archive, a collection of 250 diverse univariate time-series of varying lengths spanning domains such as medicine, sports, entomology, and space science, with natural as well as realistic synthetic anomalies overcoming the pitfalls of previous benchmarks. Given the heterogeneity of included time-series, we split UCR into 9 subsets by domain, (1) Acceleration, (2) Air Temperature, (3) Arterial Blood Pressure, ABP, (4) Electrical Penetration Graph, EPG, (5) Electrocardiogram, ECG, (6) Gait, (7) NASA, (8) Power Demand, and (9) Respiration, RESP. We will refer to each of these subsets as a separate \textit{dataset}.

\vspace{-2mm}
\paragraph{Server Machine Dataset (\texttt{SMD}) \citep{su2019robust}} SMD comprises of multivariate time-series with 38 features from 28 server machines collected from large internet company over a period of 5 weeks. Each entity includes common metrics such as CPU load, memory and network usage etc. for a server machine. Both train and test sets contain around 50,000 timestamps with 5\% anomalous cases. While there has been some criticism of this dataset~\citep{wu2021current}, we use SMD due to its multivariate nature and the variety it brings to our evaluation along with UCR. We emphasize that on visually inspecting the SMD, we found most of the anomaly labels to be accurate, much like~\citep{challu2022deep}.


We refer to SMD as a single dataset, and thus in our evaluation consider 10 datasets (9 from UCR + SMD).


\subsection{Model Set}

There are over a hundred unique time-series anomaly detection algorithms developed using a wide variety of approaches. For our experiments, we implemented a representative subset of 5 popular methods (Table~\ref{tab:models}), across two different learning types (unsupervised and semi-supervised) and three different method families (forecasting, reconstruction and distance-based)~\citep{schmidl2022anomaly}. We defer brief descriptions of models to Appendix~\ref{a:model_description}.

We created a pool of \textbf{3} $k$-NN~\citep{ramaswamy2000efficient}, \textbf{4} moving average, \textbf{4} DGHL~\citep{challu2022deep}, \textbf{4} LSTM-VAE~\citep{park2018multimodal} and \textbf{4} RNN~\citep{chang2017dilated} models by varying hyper-parameters for each model, for a total \textbf{19} combinations. Finally, to efficiently train our models, we sub-sampled all time-series with $T > 2560$ by a factor of 10.

\begin{table}[!hbt]
\centering
\resizebox{0.75\columnwidth}{!}{\begin{tabular}{cccc}
\Xhline{1pt}
\textbf{Model} & \textbf{Area} & \textbf{Learning Type} & \textbf{Method Family} \\ \midrule
\texttt{Moving Average} & Statistics & Unsupervised & Forecasting \\
\texttt{$k$-NN} \citep{ramaswamy2000efficient} & Classical ML & Semi-supervised & Distance \\
\texttt{Dilated RNN} \citep{chang2017dilated} & Deep Learning & Semi-supervised & Forecasting \\
\texttt{DGHL} \citep{challu2022deep} & Deep Learning & Semi-supervised & Reconstruction \\
\texttt{LSTM-VAE} \citep{park2018multimodal} & Deep Learning  & Semi-supervised & Reconstruction \\
\Xhline{1pt}
\end{tabular}}
\caption{\textit{Models set and their properties}. Detailed descriptions of model architectures and hyper-parameters can be found in Appendix~\ref{a:model_description}.}
\label{tab:models}
\end{table}

\subsection{Experimental Setup}
\label{s:baseline}

\paragraph{Evaluating model selection performance.} We evaluate all model selection strategies based on the adjusted best $F_1$ (Section~\ref{s:prel}) of the top-1 selected model. While there exist other popular measures to evaluate ranking algorithms such as Normalized Discounted Cumulative Gain \citep{wang2013theoretical}, Mean Reciprocal Rank, etc., our evaluation metric is grounded in practice, since in our setting only one model will be selected and used. 


Our approach can be used to select the best model for an individual time-series. Since anomalies are rare, a single time-series may contain only a few anomalies (e.g., one  point anomaly in the case of some UCR time-series). This makes the \textit{evaluation} of a selection strategy very noisy. To combat this issue, we perform model selection at the dataset level, i.e., we select a single method for a given dataset.

\paragraph{Baselines.} As a baseline, we consider an approach in which we label a part of a dataset and select the best method based on these labels. We call this baseline ``\textit{supervised selection}" or \texttt{SS} in short. This represents current practice, where a subset of data is labeled, and the best performing model is used in the rest of the dataset. Specifically, we randomly partition each dataset into \textit{selection} (20\%) and \textit{evaluation} (80\%) sets. Each time-series in the dataset is assigned to one of these sets. The baseline method selects a model based on anomaly labels in the selection set. Our surrogate metrics do not use the selection set. Instead, the surrogate metrics are computed over the evaluation set without using the labels. For reference, we also report the expected performance when selecting a model from the model set at random (``\texttt{random}") and the performance of the actual best model (``\texttt{oracle}").

Adjusted best $F_1$ scores of the selected model (either by the baseline, surrogate metric or rank aggregation) are then reported. The entire experiment is repeated 5 times with random selection/evaluation splits.


\vspace{-2mm}
\paragraph{Model selection strategies.} We compare \textbf{17} individual surrogate metrics, including \textbf{5} prediction error metrics, \textbf{9} synthetic anomaly injection metrics, \textbf{3} metrics of centrality (average distance to 1, 3 and 5 nearest neighbors) (Section\ref{s:metrics}); and \textbf{4} proposed robust variations of Borda rank aggregation (see Appendix~\ref{a:robust_borda}).

In the preliminary round of evaluations we found 5 of the anomaly injection metrics (\textit{scale}, \textit{noise}, \textit{cutoff}, \textit{contextual} and \textit{speedup}) to generally outperform other surrogate metrics across all datasets (Appendix~\ref{a:results}). We apply rank aggregation only to these 5 metrics. While this subset of surrogate metrics was identified using labels, it does not appear to depend on the dataset. Thus, given a new dataset without labels, we still propose to use rank aggregation over the same 5 surrogate metrics.


\vspace{-2mm}
\paragraph{Pairwise-statistical tests.} We carry out one-sided paired Wilcoxon signed rank tests at a significance level of $\alpha = 0.05$ to identify significant performance differences between \textit{all pairs} of model selection strategies and baselines on each dataset.

\vspace{-2mm}
\paragraph{Experimentation environment.} All models and model selection algorithms were trained and built using \texttt{scikit-learn} $1.1.1$ \citep{pedregosa2011scikit}, PyTorch $1.11.0$ \cite{paszke2019pytorch} along with Python $3.9.13$. All our experiments were carried out on an AWS \texttt{g3.4xlarge} EC2 instance with with $16$ Intel(R) Xeon(R) CPUs, $122$ GiB RAM and a $8$ GiB GPU. 


\subsection{Results and Discussion}

A subset of our results are shown in Figure~\ref{fig:results} and Table~\ref{tab:signficant_wins_losses}. Here we provide results for model selection using some of the individual surrogate metrics, as well as three variations of rank aggregation: Borda, robust Borda and a special case of robust Borda where we select only one metric, the one that minimizes empirical influence, ``\textit{minimum influence metric}" or \texttt{MIM} (Section \ref{s:aggr}). The complete set of results with all ranking metrics can be found in Appendix~\ref{a:results}.

\begin{figure}[!tb]
    \centering
    \includegraphics[trim={0cm 0cm 0cm 0cm}, clip, width = 0.85\columnwidth]{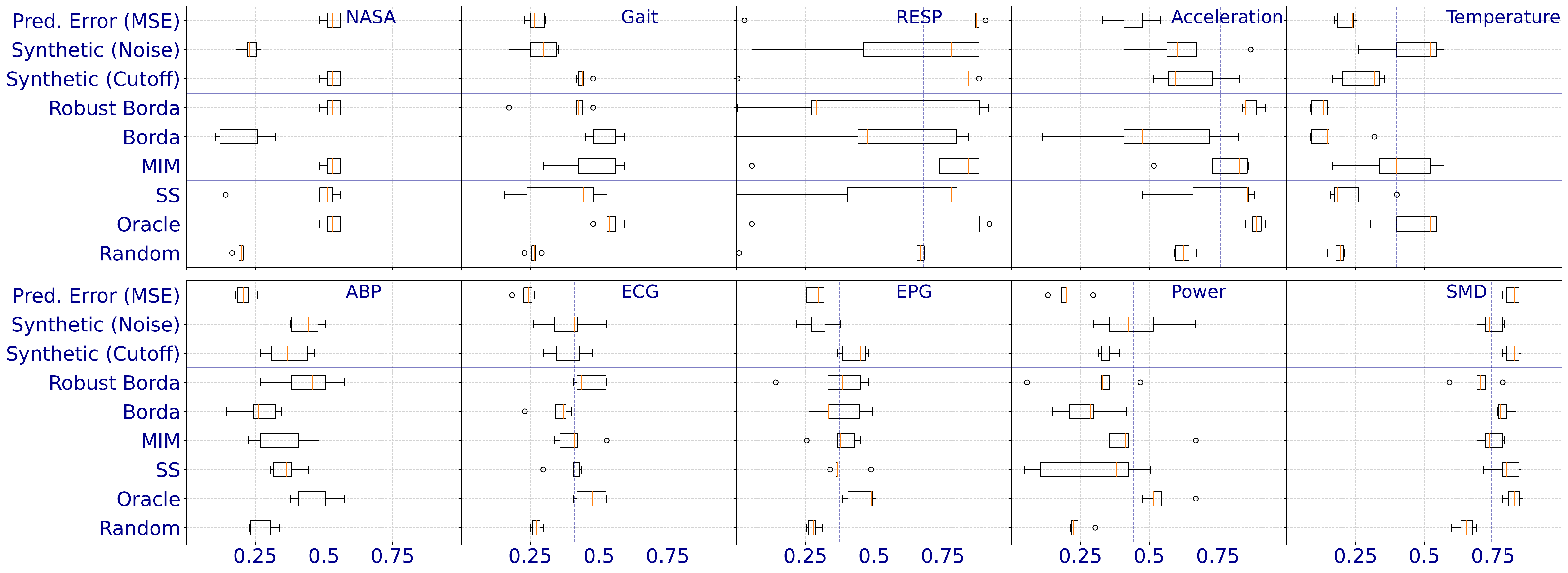}
    \caption{\textit{Performance of the top-1 selected model}. Box plots of adjusted best $F_1$ of the model \textit{selected} by each metric or baseline across 5 unique combinations of the selection and evaluation sets. Orange (\textcolor{orange}{$\mid$}) and blue (\textcolor{blue}{$\mid$}) vertical lines represent the median and average performance of each metric or baseline, and the minimum influence metric (\texttt{MIM}), respectively. Each box plot is organized into 3 sections: performance of individual metrics, Borda and its robust variations, and three baselines.}
    \label{fig:results}
\end{figure}

\begin{wraptable}[14]{r}{0.42\columnwidth}
\resizebox{0.42\columnwidth}{!}{\begin{tabular}{c|cc|ccc}
\Xhline{1pt}
& \multicolumn{2}{c}{\textbf{Significant Wins}} & \multicolumn{3}{c}{\textbf{Significant Losses}} \\ 
\textbf{Ranking Methods/Baselines} & \texttt{SS} & \texttt{Random} & \texttt{Oracle} & \texttt{SS} & \texttt{Random} \\ \midrule
\texttt{Pred. Error (MSE)} & 1 & 3 & 8 & 4 & 4  \\ 
\texttt{Syn. (Noise)}      & \textbf{2} & 5 & 6 & 0 & 0  \\
\texttt{Syn. (Cutoff)}     & 1 & \textbf{8} & 8 & 0 & 0  \\ 
\texttt{Syn. (Scale)}      & 1 & \textbf{8} & 7 & 0 & 0  \\
\texttt{Syn. (Contextual)} & 1 & 5 & 7 & 0 & 0  \\
\texttt{Syn. (Average)}    & 0 & 6 & 7 & 1 & 0  \\ \midrule
\texttt{Robust Borda}           & 0 & 4 & \textbf{6} & 2 & 0  \\
\texttt{Borda}                  & 0 & 4 & 9 & 3 & 0  \\
\texttt{MIM}                    & 1 & \textbf{7} & \textbf{6} & 0 & 0  \\ \midrule
\texttt{SS}                     & n/a & 4 & 7 & n/a & 0  \\
\texttt{Oracle}                 & 7 & 10 & n/a & 0 & 0 \\ 
\texttt{Random}                 & 0 & n/a & 10 & 4 & n/a \\
\Xhline{1pt}
\end{tabular}}
\caption{\textit{Results of pairwise-statistical tests.} On a total of $n = 10$ datasets, minimum influence metric (\texttt{MIM}) and robust Borda have the fewest significant losses to oracle, and are considerably better than random model selection and Borda.}
\label{tab:signficant_wins_losses}
\end{wraptable} 

\paragraph{No single best surrogate metric.} Our results show that there is no surrogate metric which consistently selects the best model on all datasets. For instance, as can be seen in Figure~\ref{fig:results}, model selection based on prediction error (MSE) performs almost on par with the oracle on NASA, respiration (\texttt{RESP}) and server machine dataset (\texttt{SMD}), but worse than random model selection on the electrocardiogram (\texttt{ECG}), Arterial Blood Pressure (\texttt{ABP}) and acceleration datasets. 

\vspace{-1mm}
\paragraph{Minimum Influence Metric perform on par with Supervised Selection.} \texttt{MIM} is never significantly worse than \texttt{SS} and is in fact better on the air temperature (\texttt{Temperature}) dataset. Robust Borda on the other hand, is on par with \texttt{SS} on \textbf{8} datasets and significantly worse only on \texttt{Temperature} and \texttt{SMD}.

\vspace{-1mm}
\paragraph{Robust variations of Borda outperform Borda.} Aggregating via robust Borda and \texttt{MIM} are significantly better than Borda aggregation on \textbf{4} and \textbf{3} datasets, respectively. While \texttt{MIM} is never significantly worse than Borda, Robust Borda is inferior to Borda on \texttt{SMD}. 

\vspace{-1mm}
\paragraph{Robust variations of Borda minimizes losses to Oracle.} From Table~\ref{tab:signficant_wins_losses} it is evident that \texttt{MIM} and Robust borda have fewer significant losses to the oracle in comparison to the 5 surrogate metrics used as their input, and compared to \texttt{SS}. Thus, in general, robust aggregation is better than any randomly chosen individual surrogate metric. In terms of significant improvements over supervised and random selection, we found \texttt{MIM} to be a better alternative than Robust Borda.

\vspace{-1mm}
\paragraph{Synthetic anomaly injection metrics outperform other classes of metrics.} We found synthetic anomaly injection to perform better than prediction error and centrality metrics (Appendix~\ref{a:results}). Among the three surrogate classes, centrality metrics performed the worst, most likely due to highly correlated bad rankings.

\vspace{-1mm}
\paragraph{Prior knowledge of anomalies might help identify good anomaly injection metrics.} For instance, synthetic metrics such as \textit{speedup} and \textit{cutoff} perform well on respiration data since these datasets are likely to have such anomalies. The slowing down of respiration rate is a common anomaly and indicative of sleep apnea\footnote{Sleep apnea is a common and potentially serious health condition in which breathing repeatedly stops and starts (\url{https://www.nhlbi.nih.gov/health/sleep-apnea}).}. Similarly, synthetic noise performs well on air temperature dataset, likely because time-series have noise anomalies to simulate the sudden failure of temperature sensors. We also hypothesize that the superior performance of synthetic noise might be due to the intrinsic noise and distortion added to $100$ of the $250$ UCR datasets \citep{wu2021current}, thereby allowing the metric to choose models which are better suited to handle the noise. In practice, domain experts are aware of the type of anomalies that might exist in the data and using our anomaly injection methods; this prior knowledge can be effectively leveraged for model selection.

\section{Related Work}
\label{s:related}

\textbf{Time-series Anomaly Detection.} The field has a long and rich history with well over a hundred proposed algorithms, differing in scope (e.g., multivariate vs univariate, unsupervised vs supervised), detection approach (e.g., forecasting, reconstruction, tree-based, etc.), etc. However, recent surveys have found that there is no single universally best method and the choice of the best method depends on the dataset \citep{schmidl2022anomaly, blazquez2021review}.

\textbf{Meta Learning and Model Selection.} These approaches aim to identify the best model for a given dataset based on its characteristics such as the number of classes, attributes, instances etc. Recently, \citet{zhao2021automatic} explored the use of meta-learning to identify the best outlier detection algorithm on tabular datasets. For time-series datasets, both \citet{ying2020automated} and \citet{zhang2021self} also approached model selection as meta-learning problem. All these studies rely on historical performance of models on a subset of ``meta-train" datasets to essentially learn a mapping from the characteristics of a dataset (``meta-features") to model performance. These methods fail if anomaly labels are unavailable for meta-train datasets, or if the meta-train datasets are not representative of test datasets. Both these assumptions are frequently violated in practice, thus limiting the utility of such methods. In our approach, we do not assume access to similar datasets with anomaly labels.

\textbf{Weak Supervision and Rank Aggregation.} Our work draws on intuition from a large body of work on learning from weak or noisy labels since predictions of each surrogate metric have inherent noise and biases. We refer the reader to recent surveys on programmatic weak supervision, e.g. \citep{zhang2022survey}. Since we are interested in ranking anomaly detection methods rather than inferring their exact performance numbers, rank aggregation is a much more relevant paradigm for our problem setting.

An extended version of Related Work is provided in Appendix~\ref{a:related}.

\section{Conclusion}
\label{s:conclude}
We consider the problem of unsupervised model selection for anomaly detection in time-series. Our model selection approach is based on surrogate metrics which correlate with model performance to varying degrees but do not require anomaly labels. We identify three classes of surrogate metrics namely, \textit{prediction error}, \textit{synthetic anomaly injection}, and \textit{model centrality}. We devise effective procedures to inject different kinds of synthetic anomalies and extend the idea of model centrality to time-series. Next, we propose to combine rankings from different metrics using a robust rank aggregation approach and provide theoretical justification for rank aggregation. Our evaluation using over 5000 trained models and 17 surrogate metrics shows that our proposed approach performs on par with selection based on partial data labeling. Future work might focus on other definitions of model centrality and include more synthetic anomalies types.

\bibliographystyle{unsrtnat}
\bibliography{xx_references}  
\newpage
\appendix
\section{Appendix}
\subsection{Related work}
\label{a:related}

\paragraph{Time-series Anomaly Detection.} This field has a long and rich history with well over a hundred proposed algorithms (\cite{schmidl2022anomaly, blazquez2021review}). The methods vary in their scope (e.g., multivariate vs univariate, unsupervised vs supervised), and in the anomaly detection approach (e.g., forecasting, reconstruction, tree-based, etc.). At a high level many of these methods learn a model for the ``normal" time-series regime, and identify points unlikely under the given model as anomalies. For example, progression of a time-series can be described using a forecasting model, and points deviating from the forecast can be flagged as outliers (\cite{malhotra2015long}). In another example, for tree-based methods, a data model is an ensemble of decision trees that recursively partition the points (sub-sequences of the time-series), until each sample is isolated. Abnormal points are easier to separate from the rest of the data, and therefore these will tend to have shorter distances from the root (\cite{guha2016robust}). For a comprehensive overview of the field, we refer the reader to one of the recent surveys on anomaly detection methods and their evaluation (\cite{schmidl2022anomaly, blazquez2021review, wu2021current}). Importantly, there is no universally best method, and the choice of the best method depends on the data at hand (\cite{schmidl2022anomaly}).

\paragraph{Meta Learning and Model Selection.} These approaches aim to identify the best model for the given dataset. The use of meta learning for model selection is a long-standing idea. For instance, Kalousis and Hilario  used boosted decision trees to predict (propose) classifiers for a new dataset given data characteristics ``meta-features") such as the number of classes, attributes, instances etc \citep{alexandros2001model}). More recently, \citet{zhao2021automatic} explored the use of meta-learning for outlier detection algorithms. Their algorithm relies on a collection of historical outlier detection datasets with ground-truth (anomaly) labels and historical performance of models on these meta-train datasets. The authors leverage meta-features of datasets to build a model that ``recommends" an anomaly detection method for a dataset. Concurrently, \citet{kotlar2021novel} developed a meta-learning algorithm with novel meta-features to select anomaly detection algorithms using labeled data in the training phase. 

\citet{ying2020automated} focused on model selection at the level of time-series. They characterize each time-series based on a fixed set of features, and use anomaly-labeled time-series from a knowledge base to train a classifier for model selection and a regressor for hyper-parameter estimation. A similar approach is taken by \citet{zhang2021self} who extracted time-series features, identified best performed models via exhaustive hyper-parameter tuning, and trained a classifier (or regressor) using the extracted time-series features and information about model performance as labels. 

In contrast to methods mentioned in this sub-section, we are interested in unsupervised model selection. In particular, given an unlabeled dataset, we do not assume access to a similar dataset with anomaly labels. 

\paragraph{Weak Supervision and Rank Aggregation.} We consider a number of unsupervised metrics, and these can be viewed as weak labels or noisy human labelers. The question is then which metric to use or how to aggregate the metrics. There has been a substantial body of work on reconciling weak labels, mostly in the context of classification, and to a lesser extend for regression. Many such methods build a joint generative model of unlabeled data and weak labels to infer latent true labels. We refer the interested reader  to a recent survey by \citet{zhang2022survey}.

Since we are interested in ranking anomaly detection methods rather than inferring their exact performance numbers, rank aggregation is a more relevant paradigm for our project. Rank aggregation concerns with producing a consensus ranking given a number of potentially inconsistent rankings (permutations). One of the dominant approaches in the field involves defining a (pseudo-)distance in the space of permutations and then finding a barycentric ranking, i.e., the one that minimizes distance to other rankings (\cite{korba2017learning}). The distance distribution can then be modeled, e.g., using the Mallows model, and rank aggregation reliability can be added by assuming a mixture from Mallows distributions with different dispersion parameters (\cite{collas2021concentric}). In our work, we draw on the ideas from this research.

\paragraph{Automatic Machine Learning (AutoML).} The success machine learning across a gamut of applications has led to an ever-increasing demand of predictive systems that can be used by non-experts. However, current AutoML systems are either limited to classification with labeled data (e.g., see AutoML and \textbf{C}ombined \textbf{A}lgorithm \textbf{S}election and \textbf{H}yperparameter Optimization  (CASH) problems in \citep{feurer2015efficient}), or self-supervised regression or forecasting problems \citep{feurer2015efficient, gijsbers2022amlb, gijsbers2019open}.

\paragraph{Using Forecasting Error to Select Good Forecasting Models.} Some prior work on anomaly detection time-series anomaly detection use the idea of selecting models with low forecasting error to pick the best model for forecasting-based anomaly detection \citep{saganowski2020time, laptev2015generic, kuang2022modeling}.

\paragraph{Semi-supervised anomaly detection model selection.} Recently, \citet{zhang2022time} leveraged reinforcement learning to select the best base anomaly detection model given a dataset. They too use the idea of model centrality (referred to as Prediction-Consensus Confidence). However, their method is semi-supervised since it relies on actual anomaly labels to evaluate the reward function. Moreover, the base models do not include any recent deep learning-based anomaly detection models. Finally, they evaluate their method on only one dataset (SWaT). 

\paragraph{Limitations of Prior Work in the Context of Unsupervised Time-series Anomaly Detection Model Selection.} In this section we briefly summarize the limitations of prior work considering the model selection problem in different settings. 
\begin{itemize}
    \item \textbf{Limited, predefined model set $\mathcal{M}$}: Many studies experiment with a small number of models for experiments ($|\mathcal{M}| \leq 5$). Most of these models are not state-of-the-art for time-series anomaly detection (e.g. \cite{saganowski2020time} use Holt-winters and ARFIMA), unrepresentative (only forecasting-based non-deep learning models), and specialised to certain domains (e.g. network anomaly detection \citep{saganowski2020time, kuang2022modeling}). In contrast, our experiments are conducted with \textbf{19} models of \textbf{5} different types  (distance, forecasting and reconstruction-based), which have been shown to perform well for general time-series anomaly detection problems \citep{schmidl2022anomaly, paparrizos2022tsb}. Moreover, our model selection methodology is agnostic to the model set. Note that neither the model centrality and synthetic anomaly injection metrics, nor robust rank aggregation relies on the nature or hypothesis space of the models.
    \item \textbf{Limited, specialised datasets or old benchmarks with known issues}: Most studies \citep{saganowski2020time, kuang2022modeling, laptev2015generic} are evaluated on a limited number of real-world datasets ($N <= 2$), of specific domains (e.g. network anomaly detection \citep{saganowski2020time, kuang2022modeling}). Most papers are either not evaluated on common anomaly detection benchmarks \citep{saganowski2020time, kuang2022modeling}, or evaluated on old benchmarking datasets (e.g. Numenta Anomaly Benchmark) with known issues \citep{wu2021current}. On the other hand, our methods are evaluated on \textbf{275} diverse time-series of different domains such as electrocardiography, air temperature etc.
    \item \textbf{Supervised model selection:} Some prior work, e.g. \citep{kuang2022modeling} used supervised model selection methods such as Bayesian Information Criterion (BIC). However, in our setting methods such as BIC cannot be used because we do not have anomaly labels and measuring model complexity of different model types (deep neural networks, non-parametric models etc.) in our case is non-trivial.
    \item \textbf{No publicly available code}: Finally, most papers do not have publicly available implementations.
\end{itemize}

\newpage
\subsection{Limitations}

\paragraph{Computational Complexity.} Our methods are computationally expensive since they rely on predictions from all models in the model set. However, our main baseline is supervised selection which requires manually labeled data. Our premise is that manual labeling is much more expensive than compute time. For example, Amazon Web Services compute time in the order of cents/hour, where human time cost approximately \$10/hour. In fact, recall that some of our benchmarks involve medical datasets where gold standard expert annotations can cost anywhere between \$50-200 per hour \citep{abend2015much}.

\paragraph{Assumption: A majority of rankings are good.} Empirical influence and robust borda methods identify good rankings if a majority of the rankings are good. Future work should look into relaxing this assumption. There has been work on provably-optimal model selection of bandit algorithms which assume a specific structure of the problem (e.g., linear contextual bandits), and that at least one base model satisfies a regret guarantee with high probability \citep{pacchiano2020model}. 

\paragraph{Assumption: Partial orderings rather than total ordering.} We assume that all metrics impart a total ordering of all the models. However, this may not always be true, in our experience, some models may have the same performance as measured by a metric. To this end, future work may focus on aggregation techniques of partial rankings \citep{ailon2010aggregation}.

\paragraph{Online model selection.} We rely on predictions of models on the test data, however, in practice test data may be unavailable during model selection.    

\paragraph{Experiments using more models and datasets.} 
\newpage
\subsection{Additional Results}
\label{a:results}
The complete set of results are shown in Fig.~\ref{fig:results_app} and Tab.~\ref{tab:signficant_wins_losses_app}. 

We compare \textbf{17} individual surrogate metrics, including \textbf{5} prediction error metrics, \textbf{9} synthetic anomaly injection metrics, \textbf{3} metrics of centrality (average distance to 1, 3 and 5 nearest neighbors) (Section\ref{s:metrics}); and \textbf{4} proposed robust variations of Borda rank aggregation, namely \textit{Partial Borda}, \textit{Trimmed Borda}, \textit{Minimum Influence Metric} and \textit{Robust Borda} (see Appendix~\ref{a:robust_borda}). We also compare Borda and its variations with exact Kemeny aggregation Appendix~\ref{a:kemeny}).

\begin{figure}[!htb]
    \centering
    \includegraphics[trim={0cm 0cm 0cm 0cm}, clip, width = \columnwidth]{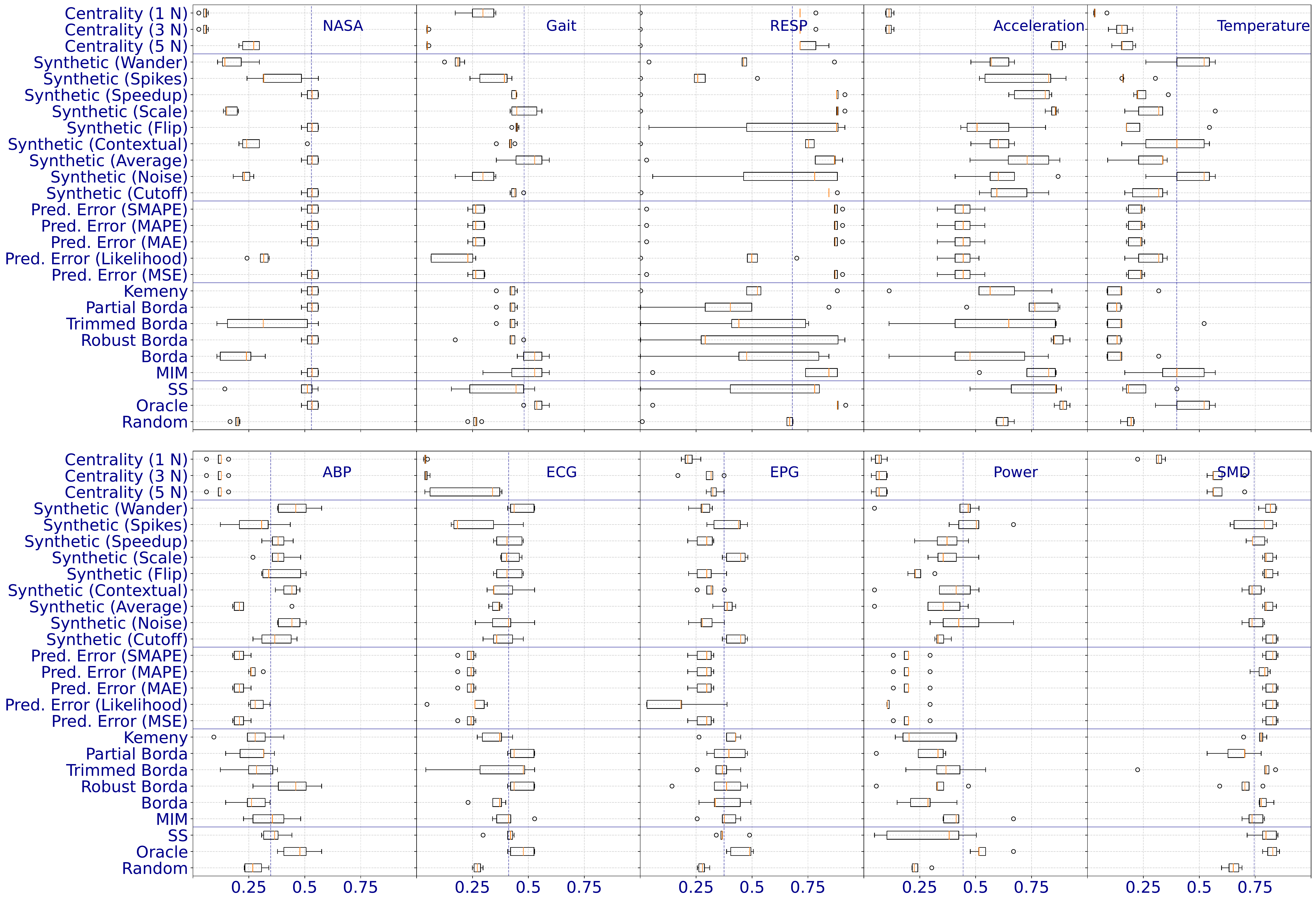}
    \caption{\textit{Performance of the top-1 selected model}. Box plots of adjusted best $F_1$ of the model \textit{selected} by each metric or baseline across 5 unique combinations of the selection and evaluation sets. Orange (\textcolor{orange}{$\mid$}) and blue (\textcolor{blue}{$\mid$}) vertical lines represent the median and average performance of each metric or baseline, and the minimum influence metric (\texttt{MIM}), respectively. Each box plot is organized into three sections: performance of individual metrics, Borda and its robust variations, and three baselines.}
    \label{fig:results_app}
\end{figure}

\begin{table}[!ht]
\centering
\resizebox{0.7\columnwidth}{!}{\begin{tabular}{c|cc|ccc}
\Xhline{1pt}
& \multicolumn{2}{c}{\textbf{Significant Wins}} & \multicolumn{3}{c}{\textbf{Significant Losses}} \\ 
\textbf{Ranking Methods/Baselines} & \texttt{SS} & \texttt{Random} & \texttt{Oracle} & \texttt{SS} & \texttt{Random} \\ \midrule
\texttt{Random} & 0 &  0 &  10 &  4 &  0 \\ 
\texttt{Oracle} & 7 &  10 &  0 &  0 &  0 \\ 
\texttt{SS} & 0 &  4 &  7 &  0 &  0 \\ \midrule
\texttt{MIM} & 1 &  7 &  6 &  0 &  0 \\ 
\texttt{Borda} & 0 &  3 &  9 &  3 &  0 \\ 
\texttt{Robust Borda} & 0 &  4 &  6 &  2 &  1 \\ 
\texttt{Trimmed Borda} & 0 &  1 &  8 &  0 &  0 \\ 
\texttt{Partial Borda} & 0 &  4 &  8 &  3 &  1 \\ 
\texttt{Kemeny} & 0 &  4 &  9 &  1 &  0 \\ \midrule
\texttt{Pred. Error (MSE)} & 1 &  3 &  8 &  4 &  4 \\ 
\texttt{Pred. Error (Likelihood)} & 0 &  3 &  9 &  3 &  2 \\ 
\texttt{Pred. Error (MAE)} & 1 &  3 &  8 &  4 &  4 \\ 
\texttt{Pred. Error (MAPE)} & 1 &  3 &  9 &  4 &  3 \\ 
\texttt{Pred. Error (SMAPE)} & 1 &  3 &  8 &  4 &  4 \\ \midrule
\texttt{Synthetic (Cutoff)} & 1 &  8 &  8 &  0 &  0 \\ 
\texttt{Synthetic (Noise)} & 2 &  5 &  6 &  0 &  0 \\ 
\texttt{Synthetic (Average)} & 0 &  6 &  7 &  1 &  0 \\ 
\texttt{Synthetic (Contextual)} & 1 &  5 &  7 &  0 &  0 \\ 
\texttt{Synthetic (Flip)} & 0 &  5 &  8 &  0 &  0 \\ 
\texttt{Synthetic (Scale)} & 1 &  8 &  7 &  0 &  0 \\ 
\texttt{Synthetic (Speedup)} & 1 &  6 &  9 &  1 &  0 \\ 
\texttt{Synthetic (Spikes)} & 0 &  3 &  8 &  1 &  1 \\ 
\texttt{Synthetic (Wander)} & 1 &  4 &  6 &  1 &  1 \\ \midrule
\texttt{Centrality (5 N)} & 0 &  3 &  10 &  4 &  3 \\ 
\texttt{Centrality (3 N)} & 0 &  0 &  10 &  6 &  7 \\ 
\texttt{Centrality (1 N)} & 0 &  0 &  10 &  7 &  7 \\
\Xhline{1pt}
\end{tabular}}
\caption{\textit{Results of pairwise-statistical tests.}. On a total of $n = 10$ datasets, minimum influence metric (\texttt{MIM}) and robust Borda have the fewest significant losses to oracle, and are much better than random model selection and Borda.}
\label{tab:signficant_wins_losses_app}
\end{table}
\newpage
\subsection{Synthetic Data Experiments}
\label{a:synthetic}
The Mallows's model \citep{fligner1986distance} is a popular distance-based distribution to model rank data. The probability mass function of a Mallows's model with central permutation $\sigma_0$ and dispersion parameter $\theta$ is given by: $$p(\sigma) = \frac{exp(-\theta d(\sigma, \sigma_0))}{\psi}$$ where $d$ is the Kendall's $\tau$ distance. With increase in $\theta$, there is stronger consensus, thereby making Kemeny aggregation easier. 

We aim to empirically answer two research questions: 
\begin{enumerate}
    \item What is the impact of noisy permutations on the median rank?
    \item Can empirical influence identify outlying permutations?
\end{enumerate}

To this end, we draw $k$ permutations from the Mallow's distribution $\sigma_1, \cdots \sigma_k \sim \mathcal{M}(\sigma_0, \theta)$ and $N-k$ permutations at uniformly at random from $S_n$. We compute the median rank $\hat \sigma$ using the Borda method. We experiment with $n \in \{50, 100\}$, $\theta \in \{0.05, \ 0.1, \ 0.2\}$ and $N =50$ based on the experimental setup used by \citet{ali2012experiments}.

\begin{figure}[!htb]
\centering
\includegraphics[trim={0cm 0cm 0cm 0cm}, clip, width = \columnwidth]{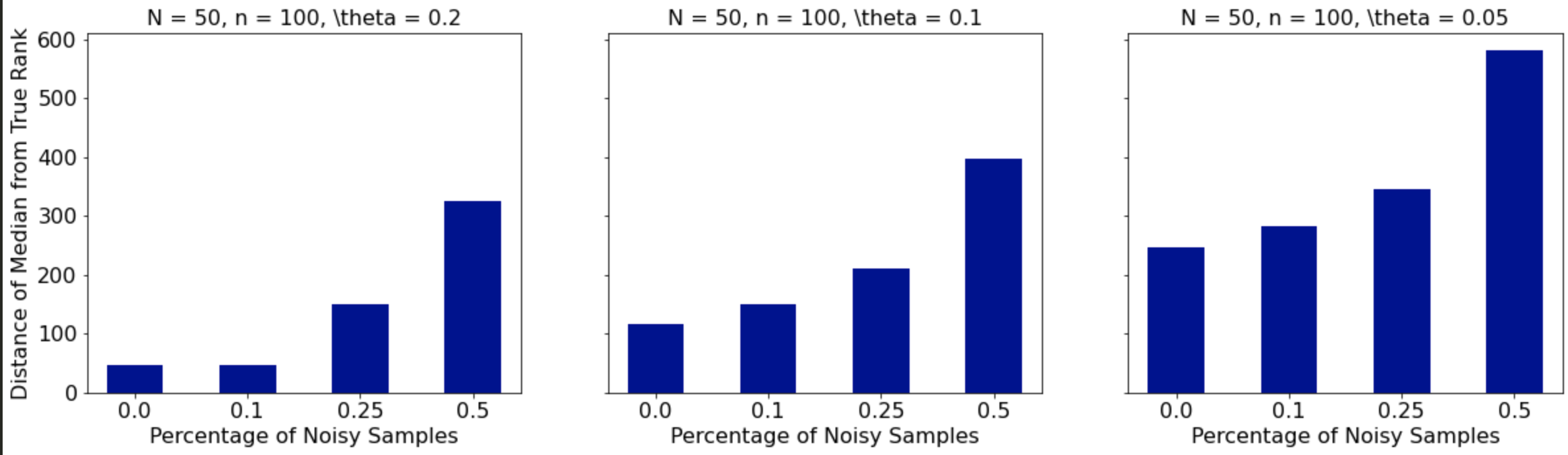}
\caption{\textit{Impact of Noisy Permutations}. With increase in noise, the distance of the median permutation $(\hat \sigma)$ from the central permutation $(\sigma_0)$ increase \textit{i.e.} noise hurts rank aggregation.}
\label{fig:syn_results_noise_impact}
\end{figure}
    
\begin{figure}[!htb]
\centering
\includegraphics[trim={0cm 0cm 0cm 0cm}, clip, width = \columnwidth]{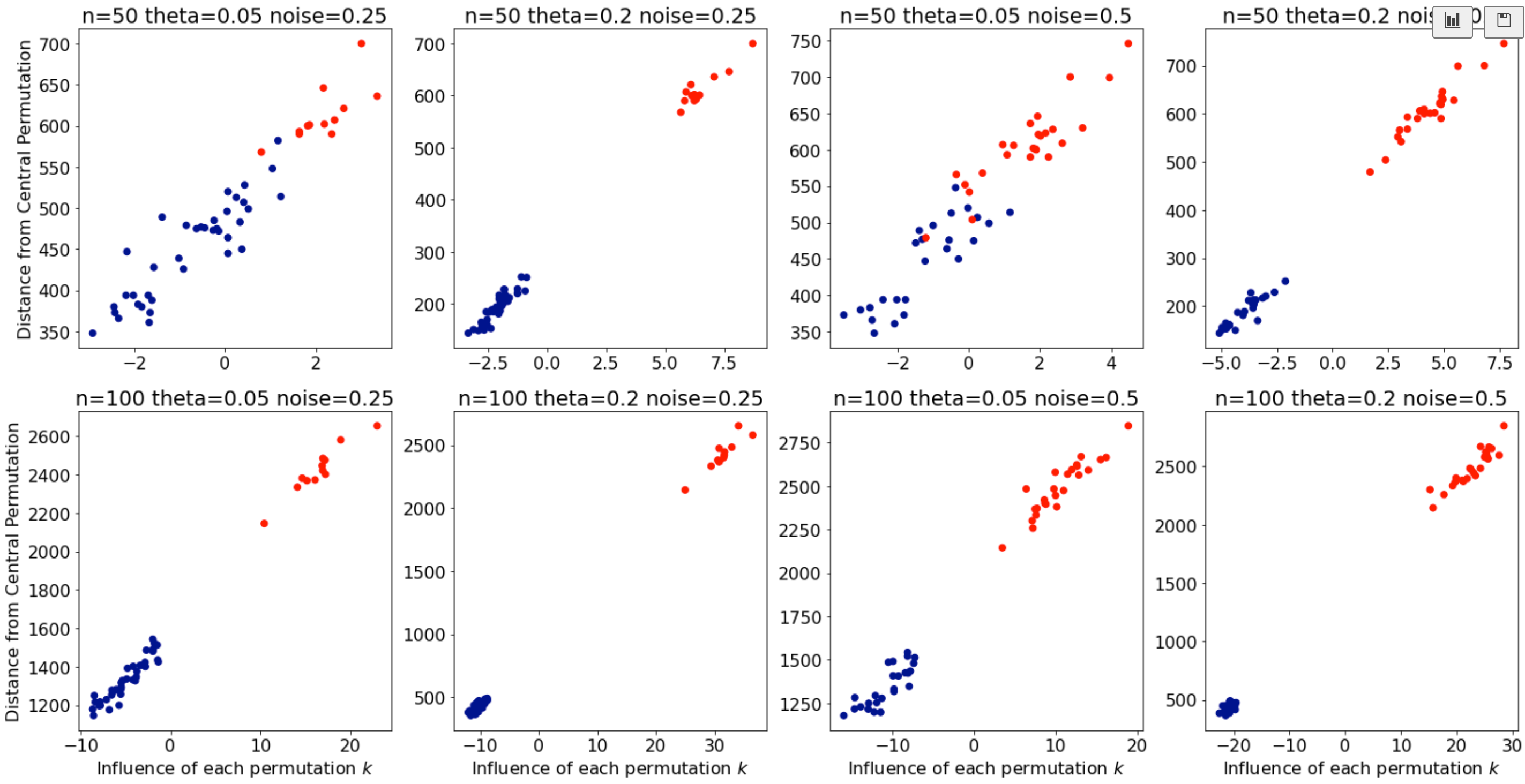}
\caption{\textit{Empirical Influence can Identify Outlying Permutations.} Influence of a permutation is directly proportional to its distance from the central permutation ($\sigma_0$). Under the Mallow's model, outlying (or low probability) permutations have a higher distance ($\sigma_0$).}
\label{fig:syn_results_EI}
\end{figure}
\newpage
\subsection{Generating Synthetic Anomalies}
\label{a:syn_anomalies}

\begin{figure}
    \centering
    \includegraphics[width=\columnwidth, trim={2cm 2cm 2cm 2cm}]{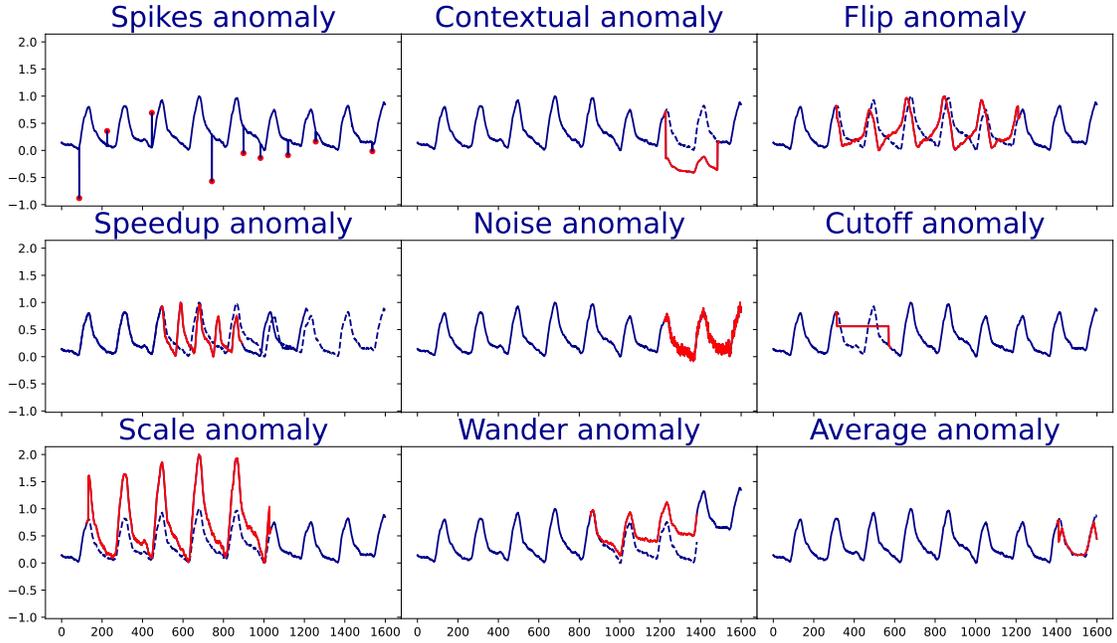}
\caption{\textit{Different kinds of synthetic anomalies}. We inject 9 different types of synthetic anomalies randomly, one at a time, across multiple copies of the original time-series. The \textcolor{blue}{- -} is original time-series before anomalies are injected, \textcolor{red}{--} is the injected anomaly and the solid \textcolor{blue}{--} is the time-series after anomaly injection.}
    \label{fig:anomalies_app}
\end{figure}

Natural anomalies do not occur randomly, but are determined by the periodic nature of time-series. Therefore, before injecting anomalies we determine periodicity of time-series by finding the its auto-correlation, and only inject anomalies (barring spikes which are point-outliers) at beginning of a cycle. We only inject a single anomaly in each time-series in a repetition, and its length is chosen uniformly at random from \texttt{[1, max\_length]}, the latter being a user defined parameter (Fig.~\ref{fig:anomalies_app}).

\paragraph{Spike Anomaly.} To introduce a spike, we first draw $m \sim \texttt{Bernoulli}(p)$, and if $m = 1$ we inject a spike whose magnitude is governed by a normal distribution $s \sim \mathcal{N}(0, \sigma^2)$.

\vspace{-1mm}
\paragraph{Flip Anomaly.} We reverse the time-series: 
$$\texttt{ts\_a[i, \texttt{start}:\texttt{end}] = ts[i, \texttt{start}:\texttt{end}][::-1]}$$

\vspace{-1mm}
\paragraph{Speedup Anomaly.} We increase or decrease the frequency of a time-series using interpolation, where the factor of change in frequency ($\frac{1}{2}$ or $2$) is a user-defined parameter. 

\vspace{-1mm}
\paragraph{Noise Anomaly.} We inject noise drawn from a normal distribution $\mathcal{N}(0, \sigma^2)$, where $\sigma$ is a user-defined parameter. 

\vspace{-1mm}
\paragraph{Cutoff Anomaly} Here, we set the time-series to either $\mathcal{N}(0, \sigma^2)$ or $\mathcal{N}(1, \sigma^2)$, where both the choice of location parameter and $\sigma$ is user-defined.

\vspace{-1mm}
\paragraph{Average Anomaly.} We run a moving average where the length of the window ($\texttt{len\_window}$) is determined by the user: 
$$\texttt{ts\_a[i, start:end] = moving\_average(ts[i, start:end], len\_window)}$$

\vspace{-1mm}
\paragraph{Scale Anomaly.} We scale the time-series by a user-defined factor ($\texttt{factor}$): 
$$\texttt{ts\_a[i, start:end] = factor*ts[i, start:end]}$$

\vspace{-1mm}
\paragraph{Wander Anomaly.} We add a linear trend with the increase in baseline (\texttt{baseline}) defined by users:
$$\texttt{ts\_a[i, start:end] = np.linspace(0, baseline, end-start) + ts[i, start:end]}$$

\vspace{-1mm}
\paragraph{Contextual Anomaly.} We convolve a the time-series subsequence linear function $aY + b$, where $a \sim \mathcal{N}(1, \sigma_a^2)$ and $b \sim \mathcal{N}(0, \sigma_b^2)$
\newpage
\subsection{Description of Models}
\label{a:model_description}


There are over a hundred unique time-series anomaly detection algorithms developed using a wide variety of approaches. For our experiments, we implemented a representative subset of 5 popular methods (Table~\ref{tab:models}), across two different learning types (unsupervised and semi-supervised) and three different method families (forecasting, reconstruction and distance-based)~\citep{schmidl2022anomaly}. 

In the context of time-series anomaly detection, ``semi-supervised" means requiring a part of time-series known to be anomaly-free. Each time-series in the UCR and SMD collections comprises of a train part with no anomalies and a test part with one (UCR) or more anomalies (SMD). We used the train part for training semi-supervised models.

We emphasize that our surrogate metrics do not rely on availability of the ``clean" train part (and we didn't use the train part for evaluation). Moreover, while semi-supervised models are \textit{supposed} to be trained on anomaly-free part, in practice they can still work (perhaps sub-optimally) after training on time-series that potentially contained anomalies.

The models we implemented can also be categorized as forecasting, reconstruction and distance methods. Forecasting methods learn to forecast the next few time steps based on the current context window, whereas reconstruction methods encode observed sub-sequences into a low dimensional latent space and reconstruct them from the latent space. In both cases, the predictions (forecasts or reconstruction) of the models are compared with the observed values. Finally, distance based models used distance metrics (e.g. Euclidean distance) to compare sub-sequences, and expect anomalous sub-sequences to have large distances to sub-sequences they are most similar to. 

We created a pool of \textbf{3} $k$-NN~\citep{ramaswamy2000efficient}, and \textbf{4} moving average, \textbf{4} DGHL~\citep{challu2022deep}, \textbf{4} LSTM-VAE~\citep{park2018multimodal} and \textbf{4} RNN~\citep{chang2017dilated} by varying important hyper-parameters, for a total \textbf{19} models. Finally, to efficiently train our models, we sub-sampled all time-series with $T > 2560$ by a factor of 10.

\begin{table}[!hbt]
\centering
\resizebox{0.75\columnwidth}{!}{\begin{tabular}{cccc}
\Xhline{1pt}
\textbf{Model} & \textbf{Area} & \textbf{Learning Type} & \textbf{Method Family} \\ \midrule
\texttt{Moving Average} & Statistics & Unsupervised & Forecasting \\
\texttt{$k$-NN} \citep{ramaswamy2000efficient} & Classical ML & Semi-supervised & Distance \\
\texttt{Dilated RNN} \citep{chang2017dilated} & Deep Learning & Semi-supervised & Forecasting \\
\texttt{DGHL} \citep{challu2022deep} & Deep Learning & Semi-supervised & Reconstruction \\
\texttt{LSTM-VAE} \citep{park2018multimodal} & Deep Learning  & Semi-supervised & Reconstruction \\
\Xhline{1pt}
\end{tabular}}

\caption{\textit{Models in the model set and their properties}.}
\label{tab:appendix-models}
\end{table}

Below is brief description of each model. 

\paragraph{Moving Average.} These models compare observations  with the ``moving" average of recent observations in a sliding window of predefined size. These methods assume that if the current observation differs significantly from moving average, then it must be an anomaly. Specifically, if $\bm{x}_t$ denotes the current observation, $h$ represents the window size and let $\tau$ be an anomaly threshold, then $\bm{x}_t$ is flagged as an anomaly if: 
$$\left | \left| \bm{x}_t - \frac{1}{h} \sum_{i = t - h - 1}^{t-1} \bm{x}_i \right|\right|_2^2 > \tau,$$
Moving average is one of the simplest, efficient yet practical anomaly detection models, frequently used by industry practitioners. Moreover, moving average is unsupervised since it does not require any normal training data.

\paragraph{$k$-Nearest Neighbors ($k$-NN) \citep{ramaswamy2000efficient}.} These distance-based anomaly detection models compare a window of observations to their $k$ nearest neighbour windows in the training data. If the current window is significantly far its $k$ closest windows on the training set, then it must be an outlier. In our implementation, the distance between time-series windows is measured using standard Euclidean distance. 
Specifically, let $\bm{w}_t  = \{ \bm{x}_{t - h}, \cdots, \bm{x}_{t} \}$ denote the current window of size $h$, $\bm{w}_1, \cdots, \bm{w}_k$ be the $k$ nearest windows in the train set to $\bm{w}_t$, $||\cdot||_2$ represent the euclidean norm, and $\tau$ be an anomaly threshold. Then the current window is flagged as an anomaly if: 
$$\frac{1}{k} \sum_{i = 1}^{k} || \bm{w}_i - \bm{w}_t ||_2^2 > \tau$$

\paragraph{Dilated RNN \citep{chang2017dilated}.} Dilated Recurrent neural network (RNN) is a recently proposed improvement over vanilla RNNs, characterized by multi-resolution dilated recurrent skip connections. These skip connections alleviate problems arising from memorizing long term dependencies without forgetting short term dependencies, vanishing and exploding gradients, and the sequential nature of forward and back-propagation in vanilla RNNs. Given a historical window of time-series, we use a dilated RNN to forecast $T_{la}$ time steps ahead. Let $w_t = \{\bm{x}_{t}, \bm{x}_{T_{la}}\}$ denote the current window of
observations, $\hat{\bm{w}}_{t} = \{\hat{\bm{x}}_{t}, \cdots, \hat{\bm{x}}_{T_{la}}\}$ be the forecasts from the dilated RNN model, and $\tau$ be an anomaly threshold. Then $\bm{x}_t$ is flagged as an anomaly if: 
$$|| \hat{\bm{w}}_{t} - \bm{w}_{t} ||_2^2 > \tau$$

\paragraph{DGHL \citep{challu2022deep}.} This model is a recent state-of-the-art Generative model for time-series anomaly detection. It uses a Top-Down Convolution Network (CNN) to map a novel hierarchical latent representation to a multivariate time-series windows. The key difference with other generative models as GANs and VAEs is that is trained with the Alternating Back-Propagation algorithm. The architecture does not include an encoder, instead, latent vectors are directly sampled from the posterior distribution. Let $\bm{x}_{t}$ be the observation and $\bm{\hat{x}}_{t}$ the reconstruction at timestamp $t$, and $\tau$ be an anomaly threshold.  Then $\bm{x}_t$ is flagged as an anomaly if: 
$$|| \hat{\bm{x}}_{t} - \bm{x}_{t} ||_2^2 > \tau$$

\paragraph{LSTM-VAE \citep{park2018multimodal}.} The LSTM-VAE is a reconstruction based model that uses a Variational Autoencoder with LSTM components to model the probability of observing the target time-series. The model also uses the latent representation to dynamically adapt the threshold to flag anomalies based on the normal variability of the data. The anomaly scores corresponds to the negative log-likelihood of an observation, $\bm{x}_t$, based on the output distribution of the model, and it is flagged as an anomaly if the score is larger than a threshold $\tau$:

$$ -\log p(\bm{x}_{t}) > \tau $$
\newpage
\subsection{Borda Rank Aggregation}
\label{a:rank_aggregations}

Borda is a positional ranking system and is an unbiased estimator of the Kemeny ranking of a sample distributed according to the Mallows Model~\citep{fligner1988multistage}. Consider $N$ models and a collection of $M$ surrogate metrics $\mathcal{S} = \{\sigma_1, \cdots, \sigma_M\}$. We use $\sigma_k$ to both refer to the surrogate metric and the ranking it induces. Then under the Borda scheme, for any given metric $\sigma_k$, the $i^{th}$ model receives $N - \sigma_k(i)$ points. Thus, the $i^{th}$ model accrues a total of $\sum_{k = 1}^M (N - \sigma_k(i))$ points. Finally, all the models are ranked in decreasing order of their total points and ties are broken uniformly at random. Borda can be computed in quasi-linear ($\mathcal{O}(M + N\log N)$) time.

\subsubsection{Improving Ranking Performance at the Top}
\label{a:topk_borda}
In model selection, we only care about top-ranked models. Thus, to improve model selection performance at the top-ranks, we only consider models at the top-$k$ ranks, setting the positions of the rest of the models to $N$ \citep{fagin2003comparing}. Here, $k$ is a user-specified hyper-parameter. Specifically, under the \textit{top-k} Borda scheme, the $i^{th}$ model accrues a total of $\sum_{m = 1}^M \mathbb{I}[\sigma_m(i) \leq k](N - \sigma_m(i))$ points. Intuitively, this increases the probability of models which have top-$k$ ranks consistently across surrogate metrics, to have top ranks upon aggregation.

\subsubsection{Robust Variations of Borda}
\label{a:robust_borda}
We consider multiple variations of Borda, namely \textit{Partial Borda}, \textit{Trimmed Borda}, \textit{Minimum Influence Metric} (\texttt{MIM}) and \textit{Robust Borda}. We collectively refer to these as robust variations of Borda rank aggregation.  

\textbf{Partial Borda} improves model selection performance, especially at the top ranks by only considering models at the top-$k$ ranks, as described in Appendix~\ref{a:topk_borda}. 

\textbf{Trimmed Borda} is another robust variation of Borda, which only aggregates reliable (``good") permutations. Trimmed Borda identifies outlying or ``bad" permutations as ones having high positive empirical influence.

\textbf{Minimum Influence Metric} (\texttt{MIM}) for a dataset is the surrogate metric with minimum influence. Recall that while high positive influence is indicative of ``bad" permutations, low values of influence are indicative of ``good" permutations. 

\textbf{Robust Borda} performs rank aggregation only based on the top-$k$ ranks while truncating permutations with high positive influence. Hence, Partial Borda, Trimmed Borda and Minimum Influence Metric can be viewed as special cases of Robust Borda.

\subsubsection{Exact Kemeny Rank Aggregation}
\label{a:kemeny}
The Kemeny rank aggregation problem can be viewed as a minimization problem on a weighted directed graph \citep{conitzer2006improved}. Let $G=(V,E)$ denote a weighted directed graph with the models as vertices. For every pair of models, we define an edge $(i \rightarrow j) \in E$ and set its weight as $w_{ij} = |\sum_{k=1}^N \mathbb{I}_{i \succcurlyeq_k j} - \sum_{k=1}^N \mathbb{I}_{j \succcurlyeq_k i}|$. Here the indicator $\mathbb{I}_{i \succcurlyeq_k j} = 1$ denotes that metric $k$ prefers model $i$ over model $j$, and hence $w_{ij}$ denotes the number of metrics which prefer model $i$ over $j$. 

Then we can formulate the rank aggregation problem as a binary linear program:
\begin{align*}
    \text{minimize} \quad & \sum_{e \in E} w_e x_e \\
    \text{subject to} \quad & \forall i \neq j \in V, x_{ij} + x_{ji} = 1 \quad \text{(Anti-symmetry and totality constraints)} \\
    & \forall i \neq j \neq k \in V, x_{ij} + x_{jk} + x_{ki} \geq 1 \quad \text{(Transitivity constraints)} \\
    & \forall i \neq j, x_{ij} \in \{0, 1\} \quad \text{(Binary constraint)}
\end{align*}

Intuitively, we incur a cost for every metric's pairwise comparison that we do not honor.  
\newpage
\subsection{Theoretical Justification for Borda Rank Aggregation}
\label{a:theory_borda}

Suppose that we have several surrogate metrics for ranking our anomaly detection models. Also suppose that each of these metrics is better than random ranking. Why would it be beneficial to aggregate the rankings induced by these metrics? Theorem~\ref{thm:borda} provides an insight into the benefit of Borda rank aggregation.

Suppose that we have $N$ anomaly detection models to rank. Consider two arbitrary models $i$ and $j$ with (unknown) ground truth rankings $\sigma^{\ast}(i)$ and $\sigma^{\ast}(j)$. Without the loss of generality, assume $\sigma^{\ast}(i) < \sigma^{\ast}(j)$, i.e., model $i$ is better.

Next, let's view each surrogate metric $k=1,\ldots,M$ as a random variable $\Omega_k$ taking values in permutations $S_N$ (a set of possible permutations over $N$ items), such that $\Omega_k(i) \in [N]$ denotes the rank of item $i$. We are interested in properties of Borda rank aggregation applied on realizations of these random variables. Theorem~\ref{thm:borda} states that, under certain assumptions, the probability of Borda ranking making a mistake (i.e., placing model $i$ lower than $j$) is upper bounded in a way such that increasing $M$ decreases this bound.










\begin{theorem*}
Assume that $\Omega_k$'s are pairwise independent, and that $\mathbb{P}(\Omega_k(i) < \Omega_k(j))>0.5$ for any arbitrary models $i$ and $j$, i.e., each surrogate metric is better than random. Then the probability that Borda aggregation makes a mistake in ranking models $i$ and $j$ is at most  $2\exp \left(-\frac{M \epsilon^2}{2} \right)$, for some fixed $\epsilon$.
\end{theorem*}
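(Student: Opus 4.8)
The plan is to reduce the event ``Borda misranks $i$ and $j$'' to a one-sided large-deviation event for an empirical mean of bounded, independent random variables, and then apply Hoeffding's inequality. First I would unpack the Borda rule: model $i$ receives $\sum_{k=1}^{M}(N-\Omega_k(i))$ points, so Borda ranks $i$ above $j$ exactly when $\sum_k \Omega_k(i) < \sum_k \Omega_k(j)$, i.e.\ when $\frac{1}{M}\sum_k X_k > 0$, where $X_k := \frac{\Omega_k(j)-\Omega_k(i)}{N-1}\in[-1,1]$ (ties are broken uniformly at random, so the probability of a misranking is at most $\mathbb{P}\!\left(\frac{1}{M}\sum_k X_k \le 0\right)$). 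Since each $X_k$ is a function of $\Omega_k$ alone, pairwise independence of the $\Omega_k$'s transfers to the $X_k$'s.

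Next I would set $\mu_k := \mathbb{E}[X_k]$ and argue that $\mu_k$ is bounded below by a fixed $\epsilon>0$ uniformly in $k$, so that $\bar\mu := \frac{1}{M}\sum_k \mu_k \ge \epsilon$. Granting this, the misranking event implies $\frac{1}{M}\sum_k (X_k-\mu_k)\le -\bar\mu\le-\epsilon$, and Hoeffding's inequality for independent variables supported in $[-1,1]$ gives $\mathbb{P}\!\left(\frac{1}{M}\sum_k(X_k-\mu_k)\le-\epsilon\right)\le \exp\!\left(-\frac{2M^2\epsilon^2}{4M}\right)=\exp\!\left(-\frac{M\epsilon^2}{2}\right)$; absorbing the one-sided-versus-two-sided slack and the tie probability into a leading factor of $2$ yields the claimed bound $2\exp\!\left(-\frac{M\epsilon^2}{2}\right)$.

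The main obstacle is the middle step: deriving $\mathbb{E}[\Omega_k(j)-\Omega_k(i)]>0$ from the hypothesis $\mathbb{P}(\Omega_k(i)<\Omega_k(j))>\frac{1}{2}$. A pairwise win-probability above $\frac{1}{2}$ does not by itself force a positive expected rank gap (a metric could rank $i$ slightly above $j$ most of the time but occasionally far below it). I see two ways to close this. (i) Replace $X_k$ by the sign variable $\sign(\Omega_k(j)-\Omega_k(i))\in\{-1,+1\}$, whose mean is exactly $2\,\mathbb{P}(\Omega_k(i)<\Omega_k(j))-1>0$; this requires justifying that the Borda comparison of the pair $\{i,j\}$ is governed by $\sum_k \sign(\Omega_k(j)-\Omega_k(i))$, which holds when the contributions of the intermediate models cancel in expectation (e.g.\ under stochastic dominance of $\Omega_k(i)$ over $\Omega_k(j)$). (ii) Strengthen ``better than random'' to $\mathbb{E}[\Omega_k(i)]<\mathbb{E}[\Omega_k(j)]$, i.e.\ metric $k$ places the better model at a smaller rank on average, which directly supplies $\mu_k>0$. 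Either way, $\epsilon$ is then taken to be a positive lower bound on the per-metric gap (e.g.\ $\min_k \mu_k$), which is fixed and independent of $M$. A secondary point to address is that Hoeffding's inequality is usually stated for mutually independent summands, whereas only pairwise independence is assumed; I would note that the concentration used here can be carried through under the weaker assumption (or, to stay strictly within classical Hoeffding, one strengthens the independence hypothesis accordingly, as is standard in this line of work).
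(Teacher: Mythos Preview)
Your approach is essentially the paper's: reduce the misranking event to a lower-tail deviation for a sum of bounded pairwise-comparison variables and apply Hoeffding. The paper goes directly to your option~(i), defining $X_k = 2\,\mathbb{I}[\Omega_k(i)<\Omega_k(j)]-1 \in \{-1,1\}$, setting $\epsilon$ so that $\mathbb{E}[X_k]>\epsilon$ for all $k$ (i.e.\ $\epsilon \le \min_k(2\,\mathbb{P}(\Omega_k(i)<\Omega_k(j))-1)$), and then applying two-sided Hoeffding to $S_M=\sum_k X_k$ with $t=\mathbb{E}[S_M]>M\epsilon$ to obtain $\mathbb{P}(S_M\le 0)\le 2\exp(-M\epsilon^2/2)$. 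The one substantive difference is that the paper simply \emph{asserts} ``if $S_M>0$ then Borda aggregation will correctly rank $i$ higher than $j$'' without argument, whereas you correctly note that the Borda comparison is governed by $\sum_k(\Omega_k(j)-\Omega_k(i))$, not by the sign sum, and that bridging the two requires an extra assumption or justification. In other words, your proposal matches the paper's route and is in fact more scrupulous about the step the paper elides; your option~(ii) (strengthening the hypothesis to $\mathbb{E}[\Omega_k(i)]<\mathbb{E}[\Omega_k(j)]$) would make the argument fully rigorous for Borda proper, which the paper's proof as written does not quite achieve.
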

\begin{proof}
We begin by defining random variables $X_k = 2\mathbb{I}[\Omega_k(i) < \Omega_k(j)]-1$. Thus $X_k = 1$ if $\Omega_k(i) < \Omega_k(j)$ and $X_k = -1$, otherwise. Denote the expectation of these random variables as $E_k \triangleq \mathbb{E}[X_k]$. We assume that $E_k > \epsilon, \ \forall k \in [M]$ for some small $\epsilon > 0$, i.e., that each surrogate metric $k$ is better than random.

Next, define $S_M \triangleq \sum_{k = 1}^M X_k$, $E_M = \sum_{k = 1}^M E_k$. Note that if $S_M > 0$ then Borda aggregation will correctly rank $i$ higher than $j$.

If $\Omega_k$ are pairwise independent, then $X_k$ are pairwise independent, and we can apply Hoeffding's inequality for all $t > 0$:
\begin{align*}
    \mathbb{P}(|S_M - E_M| \geq t) & \leq 2\exp(-\frac{2t^2}{4M}) \\
    \mathbb{P}(E_M - S_M \geq t) & \leq  2\exp(-\frac{t^2}{2M}) \\
    \mathbb{P}(S_M \leq 0) & \leq  2\exp(-\frac{(E_M)^2}{2M}), \quad \text{setting } t = E_M \\
    \mathbb{P}(S_M \leq 0) & \leq  2\exp(-\frac{M \epsilon^2}{2}), \quad \text{because } \epsilon < E_k, \forall k
\end{align*}
Recall, that $S_M < 0$ results in an incorrect ranking (ranking $j$ higher than $i$), and $S_M=0$ results in a tie (that we break uniformly at random). Therefore, the probability that Borda aggregation makes a mistake in ranking items $i$ and $j$ is upper bounded as per the above statement.

\end{proof}

We do not assume that $\Omega_k$ are identically distributed, but we assume that $\Omega_k$'s are pairwise independent.
We emphasize that the notion of independence of $\Omega_k$ as random variables is different from rank correlation between realizations of these variables (i.e., permutations). For example, two permutations drawn independently from a Mallow's distribution can have a high Kendall's $\tau$ coefficient (rank correlation). Therefore, our assumption of independence, commonly adopted in theoretical analysis~\citep{ratner2016data}, does not contradict our observation of high rank correlation between surrogate metrics.

\begin{corollary} 
Under the same assumptions as Theorem~\ref{thm:borda}, if Borda aggregation makes a mistake in ranking models $i$ and $j$ with probability at most $\delta > 0$, then $\underset{k}{\min} \ \mathbb{P}(\Omega_k(i) < \Omega_k(j)) > \sqrt{\frac{\log{\frac{2}{\delta}}}{2M}} + \frac{1}{2}$, where $k \in [M]$ and $i, j \in [N]$ are any two arbitrary models.
\label{cor:borda}
\end{corollary}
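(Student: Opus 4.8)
The plan is to invert the tail bound established in the proof of Theorem~\ref{thm:borda}, reading the quantity $\delta$ as the value of that bound. Recall that the theorem's argument produces the estimate $\mathbb{P}(\text{mistake}) \le 2\exp(-M\epsilon^2/2)$, valid for any margin $\epsilon$ with $\epsilon < E_k$ for every $k$, where $E_k = \mathbb{E}[X_k]$ and $X_k = 2\mathbb{I}[\Omega_k(i)<\Omega_k(j)]-1$. The first step I would record is the elementary identity $E_k = 2\,\mathbb{P}(\Omega_k(i)<\Omega_k(j)) - 1$, the bridge between the sign-variable expectations used in the proof and the probabilities $p_k = \mathbb{P}(\Omega_k(i)<\Omega_k(j))$ appearing in the corollary. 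Since $E_k$ is increasing in $p_k$, this also gives $\min_k E_k = 2\min_k p_k - 1$, so the admissible margins are exactly those $\epsilon$ below $2\min_k p_k - 1$.

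Next I would impose that the certified bound be at most $\delta$, i.e. $2\exp(-M\epsilon^2/2) \le \delta$, and solve for $\epsilon$. Isolating the exponential, taking logarithms, and using that $\exp$ is increasing yields $M\epsilon^2/2 \ge \log(2/\delta)$, hence $\epsilon \ge \sqrt{2\log(2/\delta)/M}$. Because $2\exp(-M\epsilon^2/2)$ is strictly decreasing in $\epsilon$, this is precisely the range of margins for which the guarantee drops to $\delta$ or below; monotonicity is what lets me pass from ``the bound is at most $\delta$'' to ``the margin is at least the displayed quantity.'' Combining with the constraint $\epsilon < \min_k E_k$, the existence of an admissible margin certifying the bound forces $\min_k E_k > \sqrt{2\log(2/\delta)/M}$, the strict inequality inherited from the strict constraint $\epsilon < E_k$ in the theorem.

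The final step is purely algebraic: substitute $\min_k E_k = 2\min_k p_k - 1$ into $\min_k E_k > \sqrt{2\log(2/\delta)/M}$ and rearrange, pulling the factor $\tfrac12$ inside the radical via $\tfrac12\sqrt{2\log(2/\delta)/M} = \sqrt{\log(2/\delta)/(2M)}$, which reproduces the stated threshold $\min_k p_k > \sqrt{\log(2/\delta)/(2M)} + \tfrac12$.

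I do not anticipate a hard analytic obstacle, since the corollary is a reparametrization of the theorem's bound rather than a new probabilistic estimate; the care points are bookkeeping. The main subtlety I would flag explicitly is the reading of the hypothesis: ``Borda makes a mistake with probability at most $\delta$'' must be interpreted as the theorem's \emph{certified} upper bound being at most $\delta$ (equivalently, setting $\delta = 2\exp(-M\epsilon^2/2)$ and solving for the required margin), because the true mistake probability can fall below $\delta$ while $\min_k p_k$ fails the threshold. Beyond that interpretive point, the only places to be careful are the direction of the inequality when inverting the decreasing exponential and the translation between $E_k$ and $p_k$.
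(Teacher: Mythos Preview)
Your proposal is correct and follows essentially the same route as the paper: set the Hoeffding bound $2\exp(-M\epsilon^2/2)$ to at most $\delta$, solve for $\epsilon$, and then pass from $\epsilon$ to $\min_k p_k$ via $E_k = 2p_k - 1$. Your explicit flagging of the interpretive point---that the hypothesis must be read as the \emph{certified} bound being at most $\delta$ rather than the true mistake probability---is a welcome clarification the paper leaves implicit.
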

\begin{proof}
The proof follows from upper bounding the probability that Borda makes a mistake in ranking models $i$ and $j$ by $\delta$:
\begin{equation}
    \mathbb{P}(S_M \leq 0) \leq  2\exp(-\frac{M \epsilon^2}{2}) \leq \delta \implies \epsilon \geq \sqrt{\frac{2\log{\frac{2}{\delta}}}{M}}
    \label{eq:inequality}
\end{equation}
Consider the following relationship between $\epsilon$ and $\underset{k}{\min} \ \mathbb{P}(\Omega_k(i) < \Omega_k(j)))$ for $k \in [M]$:
\begin{equation}
    \epsilon \leq \underset{k}{\min} \ E_k = \underset{k}{\min} \ 2 \mathbb{P}(\Omega_k(i) < \Omega_k(j)) - 1 = 2 (\underset{k}{\min} \ \mathbb{P}(\Omega_k(i) < \Omega_k(j))) - 1
    \label{eq:rel}
\end{equation}
From Equations~\ref{eq:inequality} and ~\ref{eq:rel}, we can conclude that: 
\begin{equation*}
    \underset{k}{\min} \ \mathbb{P}(\Omega_k(i) < \Omega_k(j)) \geq \sqrt{\frac{\log{\frac{2}{\delta}}}{2M}} + \frac{1}{2}
\end{equation*}
\end{proof}

To reduce chances of error, it follows from Corollary~\ref{cor:borda} that we can either increase the number of surrogate metrics, or collect a smaller number of highly accurate metrics. As an example, set $\delta = 0.05$ and consider $M=20$ metrics. Then for the Borda to err at most $5\%$ of times, each surrogate metric must prefer $A_i$ over $A_j$ with probability at least $0.7$. This also highlights the importance of empirical influence in identifying and removing bad permutations from the Borda aggregation.
\subsection{Correlation between adjusted best $F_1$ and VUS-ROC}
\label{a:vus_f1_corr}
\paragraph{Motivation.} \citet{paparrizos2022volume} in a recent study found the volume under the surface of the ROC curve (VUS-ROC) to be more robust than other compared evaluation metrics. Their study did not compare VUS-ROC against adjusted best $F_1$, used in this paper. The VUS-ROC metric is defined for univariate time-series, whereas one of the datasets we use, the Server Machine dataset (SMD), has multivariate time-series. Hence, our main results are still with respect to adjusted best $F_1$. However, we carry out an experiment to evaluate the correlation between VUS-ROC correlates with adjusted best $F_1$.

\paragraph{Experimental setup.} For every time-series, we computed the spearman-$r$ and kendall-$\tau$ correlation between the performance of models as measured by VUS-ROC and adjusted best $F_1$. We carried out this experiment on 250 time-series from the UCR anomaly archive \citep{wu2021current}. We measured the dispersion of an evaluation measure (VUS-ROC or adjusted best $F_1$) on a time-series as the interquartile range of performance values of all models in that time-series. The sliding window parameter that VUS-ROC relies on, was automatically set to the auto-correlation of time-series.

\paragraph{Results.} Out of 250 time-series, only 23 time-series did not have statistically significant positive kendall-$\tau$ and spearman-$r$ correlation. All these time-series had low dispersion of the evaluation measures \textit{i.e.} the difference in performance of most models trained on the time-series, measured in terms of VUS-ROC or adjusted best $F_1$, was less than 10\%.
 
\subsection{Correlation between adjusted best $F_1$ and adjusted PR-AUC}
\label{a:f1_prauc_corr}
On our datasets, performance of models measured using PR-AUC and adjusted best $F_1$ had statistically significant highly positive correlation (see Figure~\ref{fig:correlation_prauc_f1}. 

\begin{figure}[!h]
    \centering
    \includegraphics[trim={0cm 0cm 0cm 0cm}, clip, width = 0.4\columnwidth]{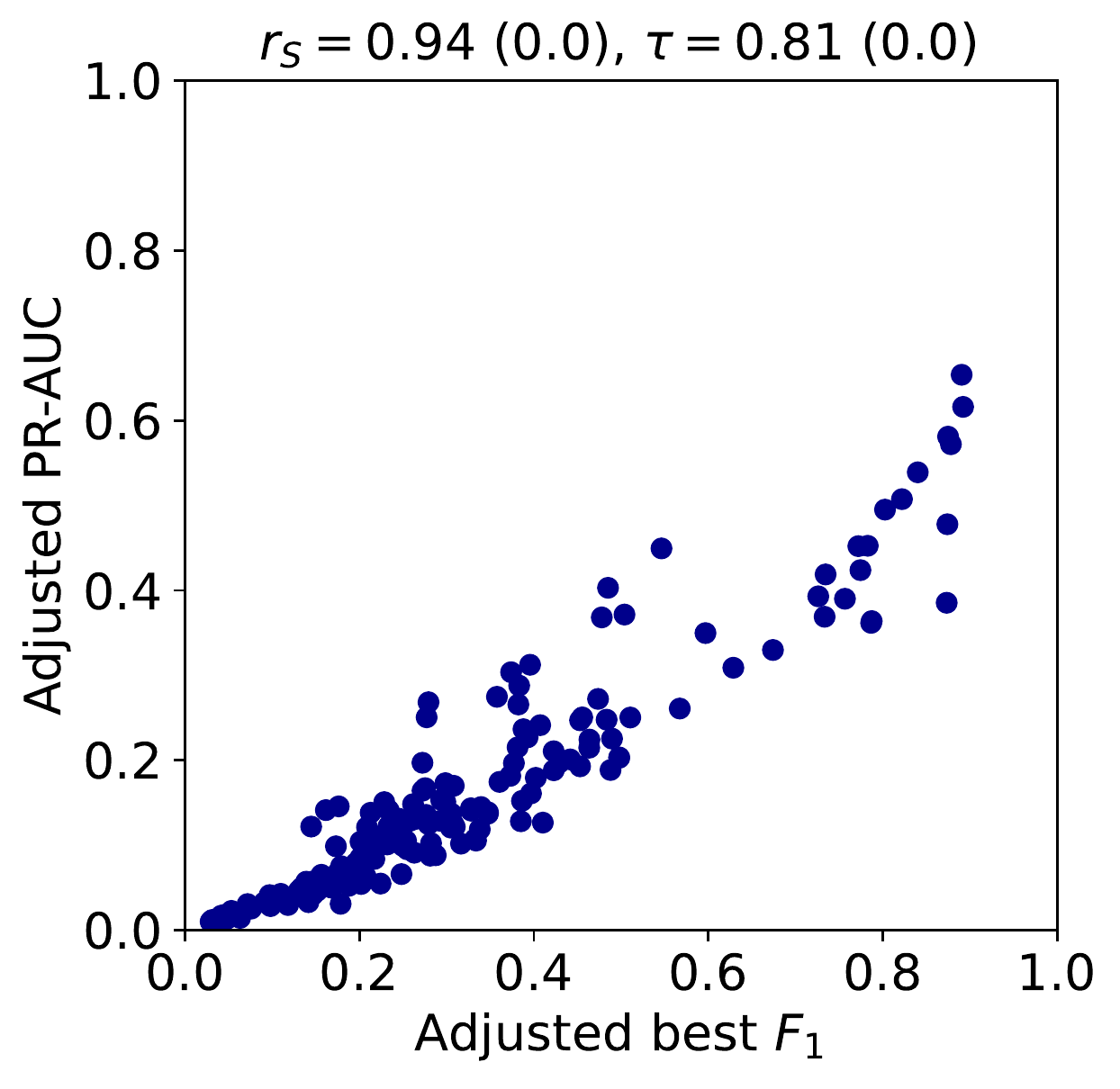}
    \caption{\textit{Correlation between adjusted best $F_1$ and adjusted PR-AUC}. $r_S$ and $\tau$ denote the spearman and kendall correlation, respectively. $p$-values of two-sided tests are reported in parenthesis.}
    \label{fig:correlation_prauc_f1}
\end{figure}

\subsection{Does empirical influence identify bad metrics?}
To measure the correlation between empirical influence and quality of a metric, we provide (1) experiments on real-world datasets, and (2) an analysis on synthetic data in Appendix~\ref{a:synthetic}, where we see that empirical influence can almost perfectly identify bad metrics.

\paragraph{Experimental setup.} We define the quality of a metric $\sigma$ using two quantities: ($Q1$) adjusted best $F_1$ of top-$5$ models ranked by a metric $\sigma$, ($Q2$) difference in the adjusted best $F_1$ of top-$5$ and bottom-$5$ models ranked by a metric $\sigma$. This quality metric is inspired by the separation analysis in \citet{paparrizos2022volume}.

\paragraph{Results.} 6 out of 9 datasets had negative kendall-$\tau$ correlation between the quality of a metric and its empirical influence, when quality is measured using $Q1$. Of these 6 datasets, 4 datasets had statistically significant negative correlation. Recall that bad metrics have high positive empirical influence (Section~\ref{s:ei_and_rra}). Even when quality is measured using $Q2$, 6 out of 9 datasets had negative kendall-$\tau$ correlation between the quality of a metric and its empirical influence. Of these 6 datasets, 2 datasets had statistically significant negative correlation.

\newpage
\subsection{Results when aggregating over all metrics}

When we perform robust rank aggregation over all the metrics, instead of only 5 anomaly injection metrics (scale, noise, cutoff, contextual and speedup), we observed a degradation in the performance of both minimum influence metric (MIM) and robust Borda. MIM has more significant wins and fewer significant losses when only the 5 best performing metrics are aggregated. This degradation in performance can be explained by the fact that some metrics have ranking performance worse than random model selection, for example model centrality metrics and prediction likelihood.

\begin{table}[!ht]
\centering
\resizebox{0.7\columnwidth}{!}{\begin{tabular}{c|cc|ccc}
\Xhline{1pt}
& \multicolumn{2}{c}{\textbf{Significant Wins}} & \multicolumn{3}{c}{\textbf{Significant Losses}} \\ 
\textbf{Ranking Methods/Baselines} & \texttt{SS} & \texttt{Random} & \texttt{Oracle} & \texttt{SS} & \texttt{Random} \\ \midrule
\texttt{MIM} & 0 &  1 &  5 &  3 &  1 \\ 
\texttt{Robust Borda} & 0 &  2 &  6 &  2 &  1 \\ 
\Xhline{1pt}
\end{tabular}}
\caption{\textit{Results of pairwise-statistical tests, when aggregation is performed using all metrics on $n = 6$ datasets.}}
\label{tab:signficant_wins_losses_app_all_metrics}
\end{table}

\begin{table}[!ht]
\centering
\resizebox{0.7\columnwidth}{!}{\begin{tabular}{c|cc|ccc}
\Xhline{1pt}
& \multicolumn{2}{c}{\textbf{Significant Wins}} & \multicolumn{3}{c}{\textbf{Significant Losses}} \\ 
\textbf{Ranking Methods/Baselines} & \texttt{SS} & \texttt{Random} & \texttt{Oracle} & \texttt{SS} & \texttt{Random} \\ \midrule
\texttt{MIM} & 2 &  3 &  5 &  0 &  0 \\ 
\texttt{Robust Borda} & 2 &  4 &  5 &  2 &  1 \\ 
\Xhline{1pt}
\end{tabular}}
\caption{\textit{Results of pairwise-statistical tests, when aggregation is performed using only best performing 5 metrics on $n = 6$ datasets.}}
\label{tab:signficant_wins_losses_app_5_metrics}
\end{table}
\subsection{Theoretical Justification for Empirical Influence}

In this section we provide two arguments in support of Empirical Influence (EI). First, we note that in an idealized case, where all but one metrics are perfect, EI helps identify the imperfect metric.

\begin{lemma}
Consider a set of $M$ surrogate metrics that consists of $(M-1)$ perfect metrics and one imperfect metric. The perfect metrics induce permutations  $\sigma_1 = \cdots =  \sigma_{M-1}=\sigma^\ast$, where $\sigma^\ast$ is the ground truth permutation, while the imperfect metric induces permutation $\sigma_M \not= \sigma^\ast$. Then $\text{EI}(\sigma_M) \geq \text{EI}(\sigma_i)$ for any $1 \geq i \geq M-1$.
\end{lemma}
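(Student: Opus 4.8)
The plan is to unfold the definition of empirical influence and reduce the claim to two elementary facts: deleting the imperfect metric from $\mathcal{S}$ leaves a unanimous collection whose Borda consensus sits at Kendall-$\tau$ distance $0$ from every member, and the functional $f$ is never negative. Concretely, write
\[
\text{EI}(\sigma_M,\mathcal{S}) = f(\mathcal{S}) - f(\mathcal{S}\setminus\sigma_M),
\qquad
\text{EI}(\sigma_i,\mathcal{S}) = f(\mathcal{S}) - f(\mathcal{S}\setminus\sigma_i)
\quad (1\le i\le M-1).
\]
Since the term $f(\mathcal{S})$ is common to both, proving $\text{EI}(\sigma_M,\mathcal{S}) \ge \text{EI}(\sigma_i,\mathcal{S})$ is equivalent to proving $f(\mathcal{S}\setminus\sigma_M) \le f(\mathcal{S}\setminus\sigma_i)$, and I would establish this by showing the left side equals $0$ while the right side is $\ge 0$.

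First I would compute $f(\mathcal{S}\setminus\sigma_M)$. The set $\mathcal{S}\setminus\sigma_M$ consists of $M-1$ identical copies of the ground-truth permutation $\sigma^\ast$. Under the Borda rule each model $a$ accrues $(M-1)\bigl(N-\sigma^\ast(a)\bigr)$ points, which is a strictly decreasing function of $\sigma^\ast(a)$; because $\sigma^\ast$ is a permutation these point totals are pairwise distinct, so the Borda ranking is exactly $\sigma^\ast$ and the uniform tie-break is never invoked. Hence
\[
f(\mathcal{S}\setminus\sigma_M) = \frac{1}{M-1}\sum_{k=1}^{M-1} d_{\tau}(\sigma^\ast,\sigma^\ast) = 0,
\]
which gives $\text{EI}(\sigma_M,\mathcal{S}) = f(\mathcal{S})$.

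Then I would note that for any nonempty collection $\mathcal{A}$ of permutations, $f(\mathcal{A})$ is an average of Kendall-$\tau$ distances and therefore $f(\mathcal{A})\ge 0$; applying this to $\mathcal{A}=\mathcal{S}\setminus\sigma_i$ yields $f(\mathcal{S}\setminus\sigma_i)\ge 0$. Combining the two displays,
\[
\text{EI}(\sigma_i,\mathcal{S}) = f(\mathcal{S}) - f(\mathcal{S}\setminus\sigma_i) \;\le\; f(\mathcal{S}) \;=\; \text{EI}(\sigma_M,\mathcal{S}),
\]
which is the desired inequality. The argument has essentially no hard step; the only points requiring a line of care are the observation that the Borda aggregate of several identical copies of a permutation is that permutation (so no tie-breaking randomness intervenes) and the degenerate small cases ($M=1$ makes the statement vacuous, $M=2$ makes it an equality since both deleted subcollections are singletons with $f=0$). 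If desired, one can additionally remark that for $M\ge 3$ the inequality is strict whenever $\text{Borda}(\{\sigma^\ast,\sigma_M\})\ne\sigma_M$, since then $f(\mathcal{S}\setminus\sigma_i)>0$, but this refinement is not needed for the lemma as stated.
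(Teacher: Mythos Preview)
Your proof is correct and follows essentially the same route as the paper: cancel the common $f(\mathcal{S})$ term, observe that $\text{Borda}(\mathcal{S}\setminus\sigma_M)=\sigma^\ast$ so $f(\mathcal{S}\setminus\sigma_M)=0$, and conclude via nonnegativity of the Kendall-$\tau$ distance that $f(\mathcal{S}\setminus\sigma_i)\ge 0$. Your additional care about tie-breaking and the degenerate cases $M=1,2$ is sound and simply adds detail the paper omits.
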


\begin{proof}
Let
$\mathcal{S} = \{\sigma_1,\cdots,\sigma_M\}$,
$f(\mathcal{A}) = \frac{1}{|\mathcal{A}|} \sum_{i=1}^{|\mathcal{A}|} d_{\tau}(\text{Borda}(\mathcal{A}), \sigma_i)$,
and $1 \geq i \geq M-1$.

Then
\begin{align*}
    \text{EI}(\sigma_M) - \text{EI}(\sigma_i) & = f(\mathcal{S}) - f(\mathcal{S} \setminus \sigma_M) - f(\mathcal{S}) + f(\mathcal{S} \setminus \sigma_i) \\
    & = f(\mathcal{S} \setminus \sigma_i) - f(\mathcal{S} \setminus \sigma_M)
\end{align*}

Since, $\sigma_1 = \cdots =  \sigma_{M-1}=\sigma^\ast$, we have $\text{Borda}(\mathcal{S} \setminus \sigma_M) = \sigma^\ast$ and $f(\mathcal{S} \setminus \sigma_M) = 0$.

Kendall tau distance $d_{\tau}$ is non-negative, hence
$f(\mathcal{S} \setminus \sigma_i) \geq 0$.

We have that $\text{EI}(\sigma_M) \geq \text{EI}(\sigma_i)$.
\end{proof}

Next, recall that $\text{Borda}(\mathcal{A})$ is an unbiased estimator of the central permutation in the Kemeny-Young problem. Given a set of permutations $\mathcal{A}$, the central permutation is defined as

\begin{equation}
    c(\mathcal{A}) = \arg\min \frac{1}{|\mathcal{A}|} \sum_{\sigma \in \mathcal{A}}^{|\mathcal{A}|} d_{\tau}(\chi, \sigma),
\end{equation}
where the minimization is over all possible permutations $\chi$ of $N$ items.

We now consider a modified notion of Empirical Influence that uses the central permutation instead of Borda approximation.

\begin{equation}
    \text{EI}^\prime(\sigma) = f^\prime(\mathcal{S}) - f^\prime(\mathcal{S} \setminus \sigma),
\end{equation}
where $f^{\prime}(\mathcal{A}) = \frac{1}{|\mathcal{A}|} \sum_{i=1}^{|\mathcal{A}|} d_{\tau}(c(\mathcal{A}), \sigma_i)$.

Using this modified definition, we show how EI is capable of distinguishing between permutations close and far from the central permutation. Recall that good permutations are those that are close to the central permutation.

\begin{lemma}

Consider a set of permutations
$\mathcal{S}=\{\sigma_1,\ldots,\sigma_{M-2},\sigma_{\text{ok}},\sigma_{\text{bad}}\}$, where $\sigma_{ok}$ and $\sigma_{bad}$ represent good and bad permutations, respectively.

Let $\sigma^\ast$ be the unknown ground truth permutation.

Next, for some $r>0$, assume that $d_{\tau}(\sigma^\ast,\sigma_{\text{ok}})<r$ and $d_{\tau}(\sigma^\ast,\sigma_{\text{bad}})>3r$.

We also assume that overall, our set $\mathcal{S}$ is of reasonable quality in the sense that\\ $d_{\tau}(c(\mathcal{S} \setminus \sigma), \sigma^\ast)<r$ for any $\sigma \in \mathcal{S}$.

Then, we have that $\text{EI}^\prime(\sigma_{\text{bad}}) > \text{EI}^\prime(\sigma_{\text{ok}})$.

\end{lemma}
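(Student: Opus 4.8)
The plan is to expand $\text{EI}^\prime(\sigma_{\text{bad}}) - \text{EI}^\prime(\sigma_{\text{ok}})$ exactly as in the previous lemma: the $f^\prime(\mathcal{S})$ terms cancel, leaving
\[
\text{EI}^\prime(\sigma_{\text{bad}}) - \text{EI}^\prime(\sigma_{\text{ok}}) = f^\prime(\mathcal{S} \setminus \sigma_{\text{ok}}) - f^\prime(\mathcal{S} \setminus \sigma_{\text{bad}}).
\]
So it suffices to show $f^\prime(\mathcal{S} \setminus \sigma_{\text{ok}}) > f^\prime(\mathcal{S} \setminus \sigma_{\text{bad}})$; i.e., the average distance from the central permutation is larger when we remove the good permutation than when we remove the bad one. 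The natural approach is to estimate each of these two averages against the \emph{same} reference, namely the behavior on the common part $\{\sigma_1,\ldots,\sigma_{M-2}\}$ plus one leftover permutation ($\sigma_{\text{bad}}$ in the first case, $\sigma_{\text{ok}}$ in the second), and then to control the central permutations $c(\mathcal{S}\setminus\sigma_{\text{ok}})$ and $c(\mathcal{S}\setminus\sigma_{\text{bad}})$ using the assumed quality bound $d_\tau(c(\mathcal{S}\setminus\sigma),\sigma^\ast) < r$.

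First I would bound $f^\prime(\mathcal{S}\setminus\sigma_{\text{bad}})$ from above. Write $c_{\text{bad}} := c(\mathcal{S}\setminus\sigma_{\text{bad}})$, which satisfies $d_\tau(c_{\text{bad}},\sigma^\ast) < r$. Each of the $M-1$ permutations in $\mathcal{S}\setminus\sigma_{\text{bad}}$ is either one of $\sigma_1,\ldots,\sigma_{M-2}$ or equals $\sigma_{\text{ok}}$; for the $\sigma_{\text{ok}}$ term, $d_\tau(c_{\text{bad}},\sigma_{\text{ok}}) \le d_\tau(c_{\text{bad}},\sigma^\ast) + d_\tau(\sigma^\ast,\sigma_{\text{ok}}) < 2r$ by the triangle inequality. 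So
\[
f^\prime(\mathcal{S}\setminus\sigma_{\text{bad}}) < \frac{1}{M-1}\Bigl(\sum_{k=1}^{M-2} d_\tau(c_{\text{bad}},\sigma_k) + 2r\Bigr).
\]
Next I would bound $f^\prime(\mathcal{S}\setminus\sigma_{\text{ok}})$ from below. Write $c_{\text{ok}} := c(\mathcal{S}\setminus\sigma_{\text{ok}})$, again with $d_\tau(c_{\text{ok}},\sigma^\ast) < r$. This set contains $\sigma_{\text{bad}}$, and $d_\tau(c_{\text{ok}},\sigma_{\text{bad}}) \ge d_\tau(\sigma^\ast,\sigma_{\text{bad}}) - d_\tau(\sigma^\ast,c_{\text{ok}}) > 3r - r = 2r$. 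For the other $M-2$ terms I would use optimality of $c_{\text{ok}}$ as the central permutation of $\mathcal{S}\setminus\sigma_{\text{ok}}$: since $c_{\text{ok}}$ minimizes the sum of distances to all of $\sigma_1,\ldots,\sigma_{M-2},\sigma_{\text{bad}}$, and I want a lower bound on just the $\sigma_1,\ldots,\sigma_{M-2}$ part, I would instead compare to $c_{\text{bad}}$: because $c_{\text{bad}}$ is optimal for the set $\{\sigma_1,\ldots,\sigma_{M-2},\sigma_{\text{ok}}\}$ and $d_\tau(c_{\text{bad}},\sigma_{\text{ok}}) < 2r$, while for $c_{\text{ok}}$ the $\sigma_{\text{ok}}$ term would be $d_\tau(c_{\text{ok}},\sigma_{\text{ok}}) \le d_\tau(c_{\text{ok}},\sigma^\ast)+d_\tau(\sigma^\ast,\sigma_{\text{ok}}) < 2r$ as well, optimality gives $\sum_{k=1}^{M-2} d_\tau(c_{\text{bad}},\sigma_k) + d_\tau(c_{\text{bad}},\sigma_{\text{ok}}) \le \sum_{k=1}^{M-2} d_\tau(c_{\text{ok}},\sigma_k) + d_\tau(c_{\text{ok}},\sigma_{\text{ok}})$, hence $\sum_{k=1}^{M-2} d_\tau(c_{\text{bad}},\sigma_k) \le \sum_{k=1}^{M-2} d_\tau(c_{\text{ok}},\sigma_k) + 2r$. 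Combining:
\[
f^\prime(\mathcal{S}\setminus\sigma_{\text{ok}}) \ge \frac{1}{M-1}\Bigl(\sum_{k=1}^{M-2} d_\tau(c_{\text{ok}},\sigma_k) + 2r\Bigr) \ge \frac{1}{M-1}\Bigl(\sum_{k=1}^{M-2} d_\tau(c_{\text{bad}},\sigma_k) - 2r + 2r\Bigr),
\]
which is not yet strictly larger than the upper bound on $f^\prime(\mathcal{S}\setminus\sigma_{\text{bad}})$; the $2r$ slack I spent comparing centers exactly eats the $2r$ gain from $\sigma_{\text{bad}}$.

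The main obstacle, therefore, is exactly this bookkeeping: the naive triangle-inequality bounds are too lossy, and I expect the statement really needs the $3r$ separation to be used more sharply — e.g., by bounding $d_\tau(c_{\text{bad}}, \sigma_k)$ and $d_\tau(c_{\text{ok}}, \sigma_k)$ against each other directly via $d_\tau(c_{\text{bad}}, c_{\text{ok}}) \le d_\tau(c_{\text{bad}},\sigma^\ast) + d_\tau(\sigma^\ast, c_{\text{ok}}) < 2r$, so that $|d_\tau(c_{\text{bad}},\sigma_k) - d_\tau(c_{\text{ok}},\sigma_k)| < 2r$ termwise, giving $|\sum_k d_\tau(c_{\text{bad}},\sigma_k) - \sum_k d_\tau(c_{\text{ok}},\sigma_k)| < 2r(M-2)$ — still too weak per-term. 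The cleanest fix is probably to avoid introducing two different centers at all: bound both $f^\prime(\mathcal{S}\setminus\sigma_{\text{ok}})$ and $f^\prime(\mathcal{S}\setminus\sigma_{\text{bad}})$ in terms of distances to the single fixed reference $\sigma^\ast$ (using that the central permutation only does better than $\sigma^\ast$, so $f^\prime(\mathcal{A}) \le \frac{1}{|\mathcal{A}|}\sum_{\sigma\in\mathcal{A}} d_\tau(\sigma^\ast,\sigma)$, and for the lower bound $f^\prime(\mathcal{A}) \ge \frac{1}{|\mathcal{A}|}\sum_{\sigma\in\mathcal{A}}(d_\tau(\sigma^\ast,\sigma) - r)$ since every term satisfies $d_\tau(c(\mathcal{A}),\sigma)\ge d_\tau(\sigma^\ast,\sigma) - d_\tau(\sigma^\ast,c(\mathcal{A})) > d_\tau(\sigma^\ast,\sigma) - r$). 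Then the difference of the two $f^\prime$ values is, up to an additive error of order $r/(M-1)$ coming from these slacks, equal to $\tfrac{1}{M-1}\bigl(d_\tau(\sigma^\ast,\sigma_{\text{bad}}) - d_\tau(\sigma^\ast,\sigma_{\text{ok}})\bigr) > \tfrac{1}{M-1}(3r - r) = \tfrac{2r}{M-1}$, and one checks the accumulated slack is strictly smaller than $2r/(M-1)$, so the sign is as claimed. I would finalize the proof by making this last slack accounting precise and concluding $\text{EI}^\prime(\sigma_{\text{bad}}) > \text{EI}^\prime(\sigma_{\text{ok}})$.
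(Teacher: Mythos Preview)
Your final approach has a slack miscount that makes it fail. The lower bound $d_\tau(c(\mathcal{A}),\sigma)> d_\tau(\sigma^\ast,\sigma)-r$ loses $r$ \emph{per term}; summing over $M-1$ terms and dividing by $M-1$ leaves a slack of $r$ in the average, not $r/(M-1)$. Hence your bound reads
\[
f^\prime(\mathcal{S}\setminus\sigma_{\text{ok}}) - f^\prime(\mathcal{S}\setminus\sigma_{\text{bad}})
\;>\; \frac{1}{M-1}\bigl(d_\tau(\sigma^\ast,\sigma_{\text{bad}}) - d_\tau(\sigma^\ast,\sigma_{\text{ok}})\bigr) - r
\;>\; \frac{2r}{M-1} - r,
\]
which is negative as soon as $M\ge 4$. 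So this route cannot close.

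Your \emph{first} attempt was in fact one line away from the paper's proof; the loss came from bounding $d_\tau(c_{\text{bad}},\sigma_{\text{ok}})\ge 0$ and $d_\tau(c_{\text{ok}},\sigma_{\text{ok}})<2r$ separately. Keep the optimality inequality intact and plug it straight into the target. In your notation, the quantity $(M-1)\bigl(f^\prime(\mathcal{S}\setminus\sigma_{\text{ok}}) - f^\prime(\mathcal{S}\setminus\sigma_{\text{bad}})\bigr)$ equals
\[
\Bigl[\textstyle\sum_{k} d_\tau(c_{\text{ok}},\sigma_k) - \sum_{k} d_\tau(c_{\text{bad}},\sigma_k)\Bigr]
 + d_\tau(c_{\text{ok}},\sigma_{\text{bad}}) - d_\tau(c_{\text{bad}},\sigma_{\text{ok}}).
\]
Your optimality inequality $\sum_k d_\tau(c_{\text{bad}},\sigma_k)+d_\tau(c_{\text{bad}},\sigma_{\text{ok}})\le \sum_k d_\tau(c_{\text{ok}},\sigma_k)+d_\tau(c_{\text{ok}},\sigma_{\text{ok}})$ says precisely that the bracketed difference is $\ge d_\tau(c_{\text{bad}},\sigma_{\text{ok}}) - d_\tau(c_{\text{ok}},\sigma_{\text{ok}})$. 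Substituting, the $d_\tau(c_{\text{bad}},\sigma_{\text{ok}})$ terms cancel exactly, and what remains is
\[
d_\tau(c_{\text{ok}},\sigma_{\text{bad}}) - d_\tau(c_{\text{ok}},\sigma_{\text{ok}}) \;>\; (3r-r) - (r+r) \;=\; 0,
\]
using only the triangle inequality and $d_\tau(c_{\text{ok}},\sigma^\ast)<r$. This is the paper's argument (its notation swaps the roles of $c_{\text{ok}}$ and $c_{\text{bad}}$): rewrite both sums over the \emph{same} index set $\mathcal{S}\setminus\sigma_{\text{bad}}$, use optimality of its center once with no slack, and apply the $3r$ versus $r$ separation on the two leftover single terms.
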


\begin{proof}
Consider the difference

\begin{align*}
    \text{EI}^\prime(\sigma_{\text{bad}}) - \text{EI}^\prime(\sigma_{\text{ok}}) &= f^\prime(\mathcal{S}) - f^\prime(\mathcal{S} \setminus \sigma_{\text{bad}}) 
      - f^\prime(\mathcal{S}) + f^\prime(\mathcal{S} \setminus \sigma_{\text{ok}}) \\
    & = f^\prime(\mathcal{S} \setminus \sigma_{\text{ok}}) 
      - f^\prime(\mathcal{S} \setminus \sigma_{\text{bad}}) \\
    & \propto \sum_{\sigma \in \mathcal{S} \setminus \sigma_{\text{ok}}} d_{\tau}(c_\text{bad}, \sigma)
      - \sum_{\sigma \in \mathcal{S} \setminus \sigma_{\text{bad}}} d_{\tau}(c_\text{ok}, \sigma)
\end{align*}

Here, $c_\text{ok} = c(\mathcal{S} \setminus \sigma_{\text{bad}})$ and 
$c_\text{bad} = c(\mathcal{S} \setminus \sigma_{\text{ok}})$.

We have that
\begin{align*}
   \sum_{\sigma \in \mathcal{S} \setminus \sigma_{\text{ok}}} d_{\tau}(c_\text{bad}, \sigma)
      - \sum_{\sigma \in \mathcal{S} \setminus \sigma_{\text{bad}}} d_{\tau}(c_\text{ok}, \sigma) \\
    = \sum_{\sigma \in \mathcal{S} \setminus \sigma_{\text{bad}}} d_{\tau}(c_\text{bad}, \sigma)
     - \sum_{\sigma \in \mathcal{S} \setminus \sigma_{\text{bad}}} d_{\tau}(c_\text{ok}, \sigma)
     + d_{\tau}(c_\text{bad}, \sigma_\text{bad})
     - d_{\tau}(c_\text{bad}, \sigma_\text{ok})
\end{align*}

Under our assumptions, 
$d_{\tau}(c_\text{bad}, \sigma^\ast) < r$,
$d_{\tau}(\sigma_\text{bad}, \sigma^\ast) > 3r$
and
$d_{\tau}(\sigma_\text{ok}, \sigma^\ast) < r$.

Therefore
$\text{EI}^\prime(\sigma_{\text{bad}}) > \text{EI}^\prime(\sigma_{\text{ok}})$.

\end{proof}

\end{document}